\newcommand{\modelname}{PHyCLIP\xspace}
\newcommand{\cmark}{\ding{51}}  
\definecolor{Gray}{gray}{0.9}
\definecolor{lightbrown}{rgb}{0.827,0.784,0.667}
\definecolor{theme1}{RGB}{72,61,139}       
\definecolor{theme2}{RGB}{47,79,79}        
\definecolor{theme3}{RGB}{192,192,192}     
\definecolor{theme4}{RGB}{230,230,230}     
\definecolor{best}{RGB}{200,255,200}
\colorlet{theme1Light}{theme1!20}
\colorlet{theme2Light}{theme2!20}
\colorlet{theme3Light}{theme3!20}
\colorlet{theme4Light}{theme4!20}
\newtheorem{theorem}{Theorem}
\newtheorem{definition}{Definition}
\newtheorem{proposition}{Proposition}
\newtheorem{lemma}{Lemma}
\title{PHyCLIP: $\ell_1$-Product of Hyperbolic Factors Unifies Hierarchy and Compositionality in Vision--Language Representation Learning}
\author{
  Daiki Yoshikawa$^1$ \& Takashi Matsubara$^{1,2}$\\
  $^1$Hokkaido University\quad$^2$CyberAgent \\
}
\begin{document}

\maketitle

\begin{abstract}
    Vision--language models have achieved remarkable success in multi-modal representation learning from large-scale pairs of visual scenes and linguistic descriptions.
    However, they still struggle to simultaneously express two distinct types of semantic structures: the hierarchy within a concept family (e.g., \emph{dog} $\preceq$ \emph{mammal} $\preceq$ \emph{animal}) and the compositionality across different concept families (e.g., ``a dog in a car'' $\preceq$ \emph{dog}, \emph{car}).
    Recent works have addressed this challenge by employing hyperbolic space, which efficiently captures tree-like hierarchy, yet its suitability for representing compositionality remains unclear.
    To resolve this dilemma, we propose \emph{\modelname}, which employs an $\ell_1$-\emph{P}roduct metric on a Cartesian product of \emph{Hy}perbolic factors.
    With our design, intra-family hierarchies emerge within individual hyperbolic factors, and cross-family composition is captured by the $\ell_1$-product metric, analogous to a Boolean algebra.
    Experiments on zero-shot classification, retrieval, hierarchical classification, and compositional understanding tasks demonstrate that \modelname outperforms existing single-space approaches and offers more interpretable structures in the embedding space.
\end{abstract}

\begin{figure}[!t]
    \begin{minipage}{\textwidth}
        \centering
        \includegraphics[width=0.95\textwidth]{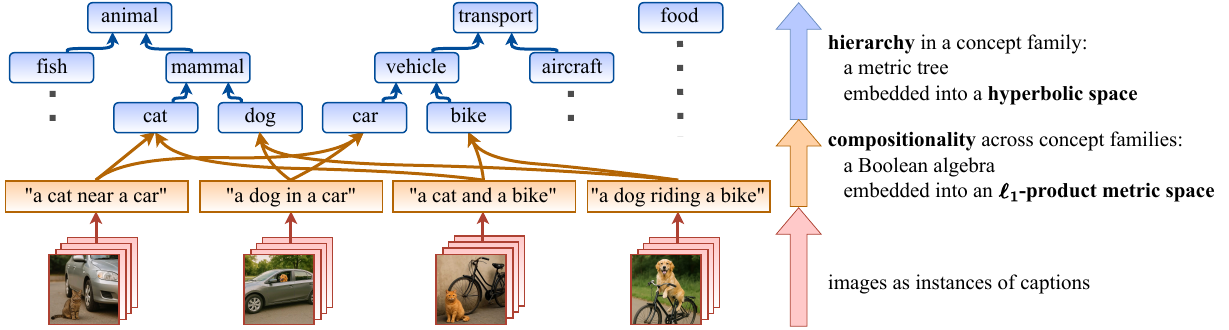}
        \caption{\textbf{Conceptual diagram of hierarchical and compositional structures.}
            While all arrows represent entailments ($\preceq$), they differ in nature.
            (upper) Linguistic concepts organize tree-like taxonomic \emph{hierarchies} of concept families, each of which can be embedded into a hyperbolic space~\citep{Sarkar2011}.
            (middle) Images and texts exhibit \emph{compositionality} across distinct concept families, which can be captured by a Boolean algebra or an $\ell_1$-product metric.
            (lower) Images are instances of their corresponding captions.
        } \label{fig:phyclip_concept}
    \end{minipage}\\[1mm]
    \begin{minipage}{\textwidth}
        \centering
        \includegraphics[width=0.85\textwidth]{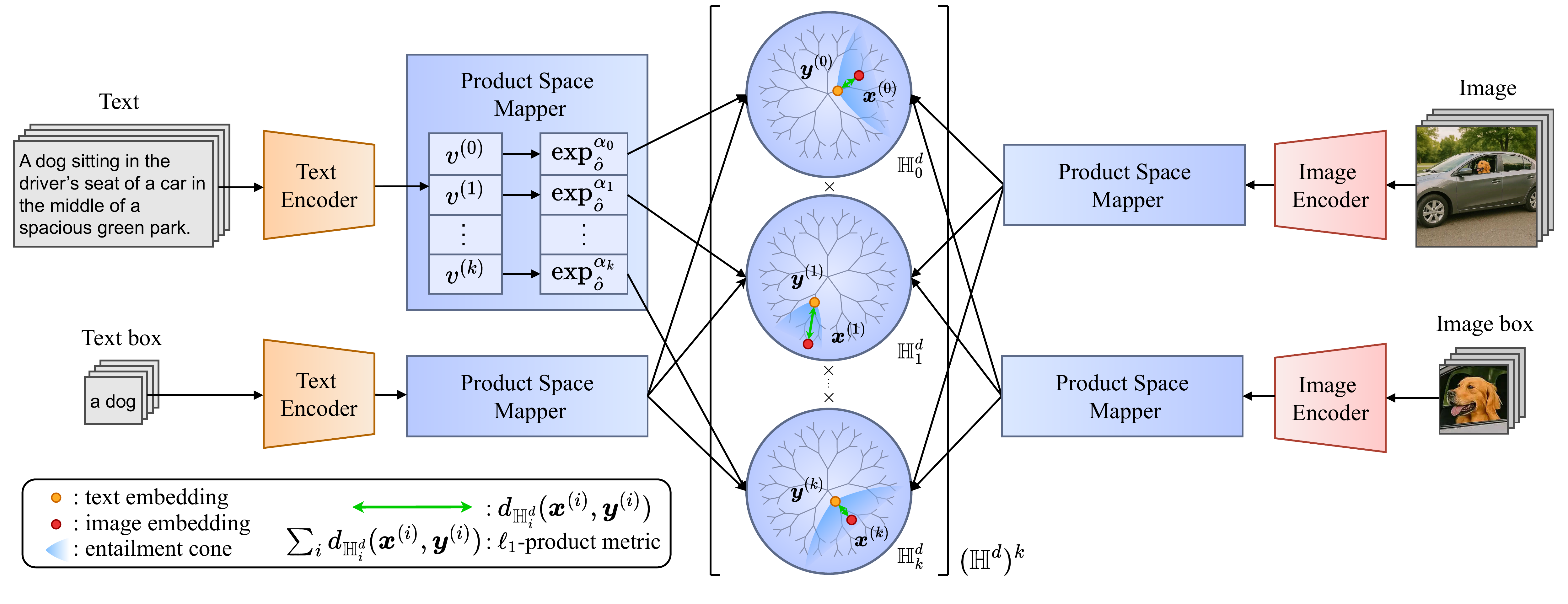}
        \vspace{-2mm}
        \caption{\textbf{Overview of \modelname.}
            Images and texts are encoded as points $\bm{X}$ in an $\ell_1$-product metric space of hyperbolic factors, $(\mathbb{H}^d)^k$, that is, as tuples of points $\bm{x}^{(i)}$ in hyperbolic spaces $\mathbb{H}^d_i$, where their distance is defined by the sum of hyperbolic distances.
            The entailment relations $\bm{X}\preceq \bm{Y}$ are encoded using entailment cones as $\bm{x}^{(i)}\in C(\bm{y}^{(i)})$ within hyperbolic factors $\mathbb{H}^d_i$.
        }\label{fig:phyclip_overview}
    \end{minipage}
    \\[-4mm]
\end{figure}

\section{Introduction}
\label{introduction}

Vision--language models have become a central paradigm for learning transferable representations across visual and textual modalities.
As exemplified by CLIP~\citep{Radford2021}, contrastive pretraining maps images and texts to embeddings and enables strong zero-shot transfer on classification, retrieval, and related tasks.
However, compressing the semantics of an instance into a single point makes it challenging to faithfully encode two semantic structures at once: \emph{hierarchy} (\emph{is-a} relations in a concept family) and \emph{compositionality} (conjunction across distinct concept families).

Visual and linguistic concepts linked by \emph{is-a} relations form tree-like taxonomic \emph{hierarchies}.
For example, a dog \emph{is a} mammal, which in turn \emph{is an} animal, as shown in the upper part of Fig.~\ref{fig:phyclip_concept}.
Because the number of nodes grows exponentially with depth, Euclidean geometry struggles to faithfully represent such trees, whereas hyperbolic geometry aligns well with this growth~\citep{Bridson1999, Sarkar2011}.
These observations have motivated the development of hyperbolic embeddings~\citep{Nickel2017} and hyperbolic entailment cones, which encode partial orders via inclusion~\citep{Ganea2018}.
Within vision--language representation learning, MERU~\citep{Desai2023} and HyCoCLIP~\citep{Pal2025} leverage these approaches to capture image--text relations; for instance, an image of a dog \emph{is an} instance of the linguistic concept \textsf{dog} (see the lower part of Fig.~\ref{fig:phyclip_concept}).

Beyond taxonomic structure, images and texts exhibit \emph{compositionality}.
For example, the description ``a dog in a car'' binds concepts \textsf{dog} and \textsf{car} from distinct concept families (animals and transportation), as shown in the middle part of Fig.~\ref{fig:phyclip_concept}.
Classical approaches express composition via logical conjunction or additive operations (e.g., Boolean algebra, bag-of-words, and vector addition in word2vec)~\citep{Hinton1986, Mikolov2013, Vendrov2016}, but these struggle to encode semantic hierarchy efficiently.
Conversely, while hyperbolic geometry captures hierarchy, it lacks a canonical operation for composition.
M\"obius addition in hyperbolic spaces~\citep{Ungar2008} is not aligned with standard vector addition or Boolean structures~\citep{Higgins2018}.
Intersections of regions (such as hyperbolic entailment cones) can approximate conjunction but offer no general guarantees of representational efficiency for arbitrary co-occurrences.

To resolve this dilemma, we propose \emph{\modelname}, which leverages an $\ell_1$-\emph{P}roduct metric on a Cartesian product of \emph{Hy}perbolic factors, as depicted in Fig.~\ref{fig:phyclip_overview}\footnote{Codes are available at \url{https://github.com/tksmatsubara/PHyCLIP}.}.
Our design follows two classical correspondences: (i) metric trees admit low-distortion embeddings into hyperbolic spaces, so hyperbolic factors embed \emph{intra-family} taxonomies~\citep{Sarkar2011,Sala2018,Spengler2025}; and (ii) finite Boolean algebras with the Hamming distance embed isometrically into an $\ell_1$ space, so an $\ell_1$-product metric naturally supports \emph{cross-family} Boolean-like composition~\citep{Deza1997}.
Intuitively, each bit for an atomic concept (e.g., dog, car, tomato) in the Boolean algebra is replaced with a hyperbolic factor for a concept family (e.g., animals, transportation, food), and the activation of multiple factors expresses composition (e.g., ``dog and car'').
Unlike previous mixed-curvature models~\citep{Gu2019, Wang2024, Gao2025}, our space uses an $\ell_1$-product metric rather than a Riemannian ($\ell_2$) product metric and constrains each factor to have negative curvature.
Our contributions are summarized as follows.\\[0.8mm]
\noindent\textbf{Balancing Hierarchy and Compositionality.}
We introduce \modelname, a vision--language model that leverages an $\ell_1$-product metric space of hyperbolic factors to jointly capture \emph{hierarchy} (within factors) and \emph{compositionality} (across factors).\\[0.8mm]
\noindent\textbf{Theoretical Support.}
We formally link Boolean lattices to $\ell_1$-product metrics and metric trees to hyperbolic factors, explaining that an $\ell_1$-product metric space of hyperbolic factors aligns better with the dual semantic structures than standard metric spaces (e.g., Euclidean or hyperbolic).\\[0.8mm]
\noindent\textbf{Superior Performance and Interpretability.}
Experiments on zero-shot classification, image--text retrieval, hierarchical classification, and compositional understanding demonstrate that \modelname achieves consistent improvements over baselines that use standard metric spaces.
Visualizations find that intra-family taxonomies emerge within individual factors, and composing concepts leads to the simultaneous activation of the corresponding factors, analogous to a Boolean algebra.

\section{Theoretical Background and Motivation}\label{sec:preliminary}

\paragraph{Geometry and Embedding of Hierarchies.}
Concepts in natural language linked by \emph{is-a} (hypernymy/hyponymy, generalization/specialization, entailment) relations form a partially ordered set (poset) and typically exhibit deep hierarchical structure.
A poset $(P,\preceq)$ is a set equipped with an order relation $\preceq$ (which is reflexive, antisymmetric, and transitive).
A typical example is $\textsf{dog}\preceq\textsf{mammal}\preceq\textsf{animal}$, where a dog \emph{is a} type of mammal; equivalently, if an entity is a dog, then this \emph{entails} that the entity is a mammal.
Large lexical resources such as WordNet provide such relations in the form of a directed acyclic graph with multiple inheritance (e.g., $\textsf{dog}\preceq\textsf{domestic animal}$)~\citep{Miller1995}.
For modeling or computational convenience, many studies approximate this hierarchy with a taxonomic tree~\citep{Morin2005, Mnih2008}.
The distance between two nodes (i.e., words) in a tree is often defined as the length of their shortest path, inducing a type of \emph{metric tree}.
The distance of a node from the root is a natural measure of specialization of the concept that the node represents.
The root node specifies the most general concept (e.g., \textsf{entity}) and effectively means nothing (i.e., the absence).
See technical details in Appendix~\ref{appendix:background}.

\begin{theorem}[Hyperbolic embedding of trees~\citep{Sarkar2011}]\label{thm:tree-to-hyp}
    Let $\mathbb{H}^d$ be a $d$-dimensional hyperbolic space with the hyperbolic distance $d_{\mathbb{H}^d}$.
    For every finite metric tree $T$ (and every infinite metric tree $T$ with known bounds for maximum degree and minimum edge length), and for every $\varepsilon>0$, there exist a scale $\tau>0$ and an embedding $f:\tau T\to\mathbb{H}^2$ such that the distortion is at most $1+\varepsilon$; that is, there exists a $(1+\varepsilon,0)$-quasi-isometric embedding $f$ up to scaling.
\end{theorem}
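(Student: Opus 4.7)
The plan is to follow Sarkar's explicit recursive construction in the Poincar\'e disk model of $\mathbb{H}^2$ and to control the distortion through a single application of the hyperbolic law of cosines at each branching point. First I would root $T$ at an arbitrary vertex $r$ and set $f(r)=0$. Then I would traverse $T$ in breadth-first order: at an already-embedded node $u$ with parent image $f(\mathrm{par}(u))$, each child $v$ is placed on the geodesic circle of hyperbolic radius $\tau w(u,v)$ around $f(u)$, with angular positions distributed in the angular sector at $f(u)$ not containing $f(\mathrm{par}(u))$ and separated by equal angles. The scale $\tau$ will be chosen large in terms of $\varepsilon$, the maximum degree $\Delta$, and, for the infinite case, the minimum edge length $w_{\min}$.

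The key geometric fact driving the construction is that in $\mathbb{H}^2$, a sub-disk of hyperbolic radius $\ell$ whose center lies at hyperbolic distance $D$ from a point $p$ subtends at $p$ an angle of order $e^{-D}\sinh\ell$. Consequently the angular footprint at $f(u)$ of the subtree rooted at a child of $u$ shrinks exponentially in the embedded edge length. Choosing $\tau$ large enough ensures that the footprints of the subtrees rooted at distinct children fit disjointly inside the available sector at $f(u)$ with a uniform angular separation $\gamma_0>0$. The recursion is then well-defined and, more importantly, any descendant of a given child lies strictly inside that child's angular wedge at $f(u)$.

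For any pair of nodes $u,v$ with lowest common ancestor $a$, let $\alpha=d_T(a,u)$, $\beta=d_T(a,v)$, and let $\gamma$ denote the angle at $f(a)$ between the geodesic rays to $f(u)$ and $f(v)$. The wedge-containment property just noted forces $\gamma\ge\gamma_0$, while by construction $f(u),f(v)$ lie at hyperbolic distances $\tau\alpha,\tau\beta$ from $f(a)$. Applying the hyperbolic law of cosines,
\begin{equation*}
    \cosh d_{\mathbb{H}^2}(f(u),f(v)) = \cosh(\tau\alpha)\cosh(\tau\beta) - \sinh(\tau\alpha)\sinh(\tau\beta)\cos\gamma,
\end{equation*}
I would show that for large $\tau$ the right-hand side is asymptotic to $\tfrac12(1-\cos\gamma)\,e^{\tau(\alpha+\beta)}$, which gives $d_{\mathbb{H}^2}(f(u),f(v)) = \tau(\alpha+\beta) + O(1)$ with the hidden constant depending only on $\gamma_0$. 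Since $\tau(\alpha+\beta)=\tau d_T(u,v)\ge\tau w_{\min}$, this additive $O(1)$ error is absorbed into a multiplicative factor $1+O(\tau^{-1})$, which is at most $1+\varepsilon$ once $\tau$ is taken sufficiently large.

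The main obstacle is the induction behind the wedge-containment property used above: I would need to argue that along any descending path, the angular coordinate viewed from a fixed ancestor $f(a)$ stays inside the wedge allocated at $f(a)$ to the immediate ancestor of $u$ below $a$, even though the recursion re-centers at every intermediate branching point. This monotonicity of angular coordinates is precisely what the exponential angular decay delivers, and it is where the hypotheses on bounded degree and minimum edge length enter for infinite trees, yielding a single $\tau=\tau(\varepsilon,\Delta,w_{\min})$ that works uniformly across the entire tree.
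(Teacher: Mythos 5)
The paper offers no proof of its own for this theorem; it defers entirely to Theorem~5 of Sarkar (2011), and your sketch is a faithful reconstruction of exactly that construction (recursive cone-based placement in the Poincar\'e disk, exponential angular decay to secure disjoint wedges and a uniform separation angle $\gamma_0$, the hyperbolic law of cosines to get $d_{\mathbb{H}^2}=\tau d_T+O(1)$, and absorption of the additive error into the multiplicative factor by taking $\tau$ large). So your approach is essentially the same as the one the paper relies on, and the argument is sound.
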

See Theorem~5 in \citet{Sarkar2011} for the proof.
This explains the empirical success of hyperbolic embeddings for hierarchical data~\citep{Nickel2017, Nickel2018, Ganea2018, Sala2018, Tifrea2019,Spengler2025}.
In practice, $\mathbb{H}^d$ with $d>2$ is common for achieving better performance.

\paragraph{Geometry and Embedding of Compositionality.}
Beyond taxonomic structure, images and texts often exhibit compositionality: they mention multiple concepts to indicate the co-occurrence or conjunction of those concepts.
For example, the description ``a dog in a car'' mentions concepts \textsf{dog} and \textsf{car}.
Such data suggest another type of entailment relation, as an image of ``a dog in a car'' can be regarded as an image of \textsf{dog} as well as an image of \textsf{car}.
The resulting structure is no longer a tree but rather a more general poset.
While hyperbolic embeddings remain an option, it is natural to explore alternatives that more directly capture compositionality.

Order embeddings $(\mathbb{R}^n,\preceq)$~\citep{Vendrov2016} assign each concept a point $\bm{x}\in\mathbb{R}^n$ and declare $\bm{x}\preceq\bm{y}$ iff $x_i\ge y_i$ for all coordinates $i$.
This is equivalent to the inclusion relation between associated upper orthants $U(\bm{x}):=\{\bm{z}\in\mathbb{R}^n:\ z_i\ge x_i\ \forall i\}$, i.e., $\bm{x}\preceq\bm{y}$ iff $U(\bm{x})\subseteq U(\bm{y})$.
Then, the coordinate-wise $\max$ (i.e., the union of orthants) expresses conjunction (e.g., $\max(\textsf{dog},\textsf{car})$ includes ``a dog in a car''), and the coordinate-wise $\min$ yields shared concepts (e.g., $\min(\text{``a dog in a car''},\,\text{``a dog on a sofa''})\approx\textsf{dog}$).
Similarly, box embeddings use axis-aligned hyperrectangles in $\mathbb{R}^n$~\citep{Vilnis2018, Li2019, Dasgupta2020}.
In hyperbolic space, hyperbolic entailment cones use geodesic conical regions~\citep{Ganea2018}, and disk embeddings use hyperballs~\citep{Suzuki2019}.
Compared with hyperbolic embeddings for pure hierarchies, there has been less theoretical analysis of these region-based embeddings for compositionality.
Our work extends this line to capture hierarchy and compositionality simultaneously.

\paragraph{Boolean Lattice and Its Relation to Order Embedding.}
In an \emph{is-a} taxonomy, any two nodes have at least one common generalization, whereas they need not share a common specialization.
A \emph{lattice} is a poset in which any two nodes have both a common generalization (join) and a common specialization (meet).
Consider $n$ atomic concepts $\mathcal{C}=\{c_1,\dots,c_n\}$ (e.g., $\{\textsf{dog},\textsf{car},\textsf{tomato},\dots\}$).
A subset $S\subseteq\mathcal{C}$ denotes the conjunction of the concepts specified in $S$, and the inclusion relation $S\supseteq T$ (e.g., $\{\textsf{dog},\textsf{car}\}\supseteq \{\textsf{dog}\},\{\textsf{car}\}$) induces the order relation $S\preceq T$ (e.g., $\{\textsf{dog},\textsf{car}\}\preceq\{\textsf{dog}\},\{\textsf{car}\}$).
In this way, the \emph{Boolean lattice} $(2^{\mathcal{C}},\subseteq)$ over all such subsets naturally represents the compositionality of atomic concepts as a non-taxonomic poset.
When focusing on operations rather than order, it is also referred to as a Boolean algebra.
At the same time, using an indicator $\chi:2^\mathcal{C}\to\{0,1\}^n$, the Boolean lattice can be regarded as a metric space $(\{0,1\}^n,d_{\mathrm{Ham}})$ with the Hamming distance.
See Appendix~\ref{appendix:background} and \citet{Ganter1999, Davey2002} for more details.
\begin{definition}[$\ell_p$-product metric space]\label{def:l1-product-metric}
    Let $\{(X_i,d_{X_i})\}_{i=1}^k$ be non-trivial metric spaces.
    An \emph{$\ell_p$-product metric space} of $\{(X_i,d_{X_i})\}_{i=1}^k$ is a Cartesian product space $\prod_{i=1}^k X_i$ equipped with the $\ell_p$-product metric $d_p((\bm{x}^{(1)},\dots,\bm{x}^{(k)}),(\bm{y}^{(1)},\dots,\bm{y}^{(k)}))=(\sum_{i=1}^k |d_{X_i}(\bm{x}^{(i)},\bm{y}^{(i)})|^p)^{1/p}$.
\end{definition}

\begin{proposition}[Embedding of Boolean Lattice]\label{prop:embedding-boolean-lattice}
    A Boolean lattice $(2^\mathcal{C},\preceq)$ for $n$ atomic concepts can be embedded into the poset $(\mathbb{R}^n,\preceq)$ used by order embeddings while preserving the order relations.
    As a metric space $(\{0,1\}^n,d_{\mathrm{Ham}})$, it is isometrically embedded into an $\ell_1$-product metric space $(\prod_{i=1}^k X_i, d_1)$ for any $k\ge n$ after appropriate per-factor scaling.
    However, it admits no isometric embedding into a hyperbolic space $\mathbb{H}^d$ for any $d\ge 2$ and $n\ge 2$.
\end{proposition}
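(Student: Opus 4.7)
I would tackle the three claims in the order they appear. For the first claim, the natural candidate is the indicator map $S \mapsto \chi_S \in \{0,1\}^n \subset \mathbb{R}^n$. Since $S \supseteq T$ holds if and only if $\chi_{S,i} \geq \chi_{T,i}$ for every coordinate $i$, the induced order $S \preceq T$ (defined in the excerpt as $S \supseteq T$) matches the order-embedding convention $\bm{x} \preceq \bm{y} \iff x_i \geq y_i$; this is a direct unwrapping of definitions.

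For the second claim, I would exploit the non-triviality of each factor to pick two distinct points $a_i, b_i \in X_i$ and rescale $d_i$ by $1/d_i(a_i, b_i)$ so that the rescaled distance $d_i(a_i, b_i)$ equals $1$. Define $\iota: \{0,1\}^n \to \prod_{i=1}^k X_i$ by sending $\chi$ to $(p_1(\chi_1), \dots, p_n(\chi_n), a_{n+1}, \dots, a_k)$, where $p_i(0) = a_i$ and $p_i(1) = b_i$. Coordinates $n+1, \dots, k$ are constant and contribute zero to the $\ell_1$-product distance, while each of the first $n$ coordinates contributes exactly $1$ precisely when the two inputs disagree there. Hence $\sum_i d_i(\iota(\chi), \iota(\chi')) = d_{\mathrm{Ham}}(\chi, \chi')$, giving the desired isometry for every $k \geq n$.

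The third and most interesting claim relies on the uniqueness of geodesic midpoints in hyperbolic space. I would exhibit a 4-point configuration in $\{0,1\}^n$ (available for every $n \geq 2$) that is incompatible with this uniqueness: take $x = (0,0,0,\dots,0)$, $y = (1,1,0,\dots,0)$, $z = (1,0,0,\dots,0)$, and $w = (0,1,0,\dots,0)$. Under the Hamming metric, $d(x,y) = 2$, $d(x,z) = d(z,y) = d(x,w) = d(w,y) = 1$, and $d(z,w) = 2$. If $f : \{0,1\}^n \to \mathbb{H}^d$ were an isometric embedding, then the tight triangle equality $d(f(x), f(z)) + d(f(z), f(y)) = d(f(x), f(y))$ would force $f(z)$ onto the unique geodesic from $f(x)$ to $f(y)$ at parameter $1/2$, and the same argument pins $f(w)$ to the same midpoint. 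Thus $f(z) = f(w)$, contradicting $d_{\mathbb{H}^d}(f(z), f(w)) = 2$.

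The main obstacle I anticipate is making the midpoint-uniqueness step fully rigorous: I would justify it by noting that $\mathbb{H}^d$ is uniquely geodesic (indeed CAT($-1$), a consequence of strictly negative curvature), so the tight triangle inequality forces a point onto the connecting geodesic, and the distance-to-endpoint condition then determines its position. The other two parts reduce to definition-chasing once the correct maps are written down, so the third argument is where the geometric content of the proposition truly lies.
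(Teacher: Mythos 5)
Your proposal is correct and follows essentially the same route as the paper's proof: an indicator-based map into $(\mathbb{R}^n,\preceq)$ for the order claim (the paper uses $1-\chi$ to match the $\subseteq$ direction, you use $\chi$ to match the entailment direction — a purely cosmetic difference), a two-point-per-factor map with constant padding for the $\ell_1$ isometry, and the same four-point square $\{(0,0),(1,0),(1,1),(0,1)\}$ forcing two distinct points onto the unique geodesic midpoint in $\mathbb{H}^d$. The only difference in the last step is that you derive the contradiction from $d(z,w)=2$ while the paper invokes injectivity; these are equivalent.
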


See Appendix~\ref{appendix:proofs} for the proof.
The first part of the proposition explains that order embeddings can be regarded as a continuous relaxation of Boolean lattices.
A pure Boolean lattice has remarkable expressivity for compositionality, but it is often too coarse as it considers combinations of all atomic concepts.
Order embeddings enrich it by replacing each bit $\{0,1\}$ with $\mathbb{R}$.
The remaining part explains the affinity between Boolean lattices and the $\ell_1$-product metric, as well as the incompatibility between Boolean lattices and hyperbolic spaces.

Note that we can also consider a weighted version of an $\ell_p$-product metric, where the distances on different factors can be weighted differently.
However, in the context of isometric embeddings, these weights and per-factor scalings are essentially equivalent, as one can be absorbed into the other.

\section{\modelname and its Loss Functions}\label{sec:proposal}
\paragraph{Embedding into an $\ell_1$-Product Metric Space of Hyperbolic Factors.}
Here, we model the semantics encoded in images and texts as a conjunction of multiple atomic concepts each of which is taken from a distinct concept family (e.g., animals, transportation, food).
This model is realized by replacing each bit $\{0,1\}$ of a Boolean lattice with a metric tree $T_i$ for a concept family.
For example, the description ``a dog in a car'' is represented by a pair of nodes in metric trees $T_1$ and $T_2$ that encode \emph{is-a} taxonomies of animals (e.g., $\textsf{dog}\preceq\textsf{mammal}\preceq\textsf{animal}$) and transportation (e.g., $\textsf{car}\preceq\textsf{vehicle}\preceq\textsf{transport}$), respectively.
Notably, a \emph{single} hyperbolic space cannot capture this product geometry (see Proposition~\ref{prop:tree-prod-not-hyp} in Appendix~\ref{appendix:proofs}), whereas an $\ell_1$-product metric space of hyperbolic factors can.

\begin{theorem}[Embedding into an $\ell_1$-product metric space of hyperbolic factors]\label{thm:tree-to-Hprod}
    Let $T_1,\dots,T_k$ be finite metric trees (or infinite metric trees with known bounds for maximum degree and minimum edge length) with metrics $d_{T_1},\dots,d_{T_k}$.
    For every $\varepsilon\!>\!0$, there exists a $(1\!+\!\varepsilon,0)$-quasi-isometric embedding from the $\ell_1$-product metric space of these metric trees, $(\prod_{i=1}^k T_i,\ d_1)$, into
    an $\ell_1$-product metric space of $k$ two-dimensional hyperbolic factors, $((\mathbb{H}^2)^k, d_1)$, after appropriate per-factor scaling.
\end{theorem}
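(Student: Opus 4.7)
The plan is to construct the product embedding factor-by-factor using Sarkar's embedding (Theorem~\ref{thm:tree-to-hyp}) and then exploit the linearity of the $\ell_1$-product metric to lift the per-factor distortion bounds to the product.

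First, fix $\varepsilon>0$ and apply Theorem~\ref{thm:tree-to-hyp} independently to each metric tree $T_i$. This yields a scale $\tau_i>0$ and an embedding $f_i:\tau_i T_i\to\mathbb{H}^2$ that is $(1+\varepsilon,0)$-quasi-isometric. Absorbing each $\tau_i$ by rescaling the $i$-th tree to $T_i':=\tau_i T_i$ (which realizes the ``appropriate per-factor scaling'' in the statement), the bounds become uniform across factors: for all $x^{(i)},y^{(i)}\in T_i'$,
\begin{equation*}
\tfrac{1}{1+\varepsilon}\, d_{T_i'}(x^{(i)},y^{(i)}) \;\leq\; d_{\mathbb{H}^2}\bigl(f_i(x^{(i)}),f_i(y^{(i)})\bigr) \;\leq\; (1+\varepsilon)\, d_{T_i'}(x^{(i)},y^{(i)}).
\end{equation*}

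Next, define the product map $F:\prod_i T_i'\to(\mathbb{H}^2)^k$ by $F(\bm{X}):=(f_1(x^{(1)}),\dots,f_k(x^{(k)}))$ for $\bm{X}=(x^{(1)},\dots,x^{(k)})$. By Definition~\ref{def:l1-product-metric}, both $d_{\prod T_i'}$ and $d_{(\mathbb{H}^2)^k}$ are sums of their respective factor distances, so summing the $k$ per-factor inequalities gives
\begin{equation*}
\tfrac{1}{1+\varepsilon}\, d_{\prod T_i'}(\bm{X},\bm{Y}) \;\leq\; d_{(\mathbb{H}^2)^k}\bigl(F(\bm{X}),F(\bm{Y})\bigr) \;\leq\; (1+\varepsilon)\, d_{\prod T_i'}(\bm{X},\bm{Y}),
\end{equation*}
which is exactly the required $(1+\varepsilon,0)$-quasi-isometric embedding.

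The argument is essentially a packaging step once Theorem~\ref{thm:tree-to-hyp} is available, and there is no substantive obstacle. The only delicate point is that the Sarkar scales $\tau_i$ generically differ across factors, so the flexibility of per-factor rescaling in the statement is essential; without it one would need the trees to be commensurable \emph{a priori}, which would prevent a uniform distortion bound. It is worth emphasizing that this aggregation is specific to the $\ell_1$ product: the additive structure allows per-factor bounds to combine without loss, whereas the same step would fail—or incur a dimension-dependent penalty—for a Riemannian ($\ell_2$) product metric, which is precisely why the $\ell_1$ choice is crucial for the construction.
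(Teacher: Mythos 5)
Your proof is correct and is essentially the paper's own argument: the paper isolates your summation step as Lemma~\ref{lem:prod-qi} (the product of $(\lambda_i,c_i)$-quasi-isometric embeddings is $(\max_i\lambda_i,\sum_i c_i)$-quasi-isometric) and then applies Theorem~\ref{thm:tree-to-hyp} factor-by-factor with per-factor scales $\tau_i$, exactly as you do. One caveat on your closing aside, which is not part of the proof and does not affect its validity: the claim that this aggregation would fail or incur a dimension-dependent penalty under an $\ell_2$-product metric is inaccurate for purely multiplicative bounds, since $a_i/\lambda \le b_i \le \lambda a_i$ for all $i$ implies $\|\bm{a}\|_2/\lambda \le \|\bm{b}\|_2 \le \lambda\|\bm{a}\|_2$ just as well; the paper's case for $\ell_1$ rests on the Boolean-lattice/Hamming-distance correspondence, not on this step.
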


Given the above, we propose embeddings into an $\ell_1$-product metric space of $k$ copies of $d$-dimensional hyperbolic factors $\mathbb{H}^d$, $((\mathbb{H}^d)^k,\ d_1)$.
The total dimension of $(\mathbb{H}^d)^k$ is $kd$.
Each hyperbolic factor $\mathbb{H}^d_i$ is intended to represent the taxonomy of a concept family, as well as aspects of inter-object relations (e.g., ``a dog riding on something'', ``a car loading something'').
An instance is embedded as a tuple $\bm{X}=(\bm{x}^{(1)},\dots,\bm{x}^{(k)})$ for $\bm{x}^{(i)}\in\mathbb{H}^d_i$.
Within each factor $\mathbb{H}^d_i$, we use standard hyperbolic embeddings~\citep{Nickel2017} together with hyperbolic entailment cones~\citep{Ganea2018} to encode \emph{intra-family} hierarchy and image--text entailment, while \emph{cross-family} compositionality is captured by the additive geometry of the $\ell_1$-product metric space.

\paragraph{\modelname for Vision--Language Representation Learning.}
Here, we propose \modelname for vision--language representation learning, depicted in Fig.~\ref{fig:phyclip_overview}.
Let $I$ and $T$ denote instances of images and texts, respectively.
From an instance, a $kd$-dimensional feature vector is produced, which is then sliced into $k$ segments $\bm{v}^{(i)}$ of dimension $d$ for $i=1,\dots,k$, and each segment $\bm{v}^{(i)}$ is lifted via the exponential map to its corresponding hyperbolic factor $\mathbb{H}^d_i$ as $\bm{x}^{(i)}$, yielding the embedding $\bm{X}=(\bm{x}^{(1)},\dots,\bm{x}^{(k)})\in(\mathbb{H}^d)^k$.
We denote the embeddings of the image $I$ and text $T$ by $\bm{I}$ and $\bm{T}$, respectively.
Let $B$ denote the index set of instances in a mini-batch; we write the mini-batch of images as $\{\bm{I}_b\}=\{\bm{I}_b\}_{b\in B}$ for brevity.

An image $I$ is typically more specific than its corresponding text $T$ ($I\preceq T$) as the text $T$ may ignore some details of the image $I$.
Following HyCoCLIP~\citep{Pal2025}, we suppose that the training data are enriched with box information: the image boxes $I^{\text{box}}$ are object-level crops of the original images $I$, and the text boxes $T^{\text{box}}$ are the corresponding nouns/phrases within the text $T$ ($I^{\text{box}}\preceq T^{\text{box}}$).
An image box $I^{\text{box}}$ and a text box $T^{\text{box}}$ are more general than the full image $I$ and the full text $T$ ($I\preceq I^{\text{box}}$, $T\preceq T^{\text{box}}$), respectively, since they omit objects and words outside the boxes.

We will introduce the contrastive loss $\mathcal{L}_{\text{cont}}$ and entailment loss $\mathcal{L}_{\text{ent}}$, and the final objective is their sum weighted by a hyperparameter $\gamma$:
\begin{equation}\label{eq:overall_loss}
    \mathcal{L}_{\text{overall}} = \mathcal{L}_{\text{cont}} + \gamma \mathcal{L}_{\text{ent}}.
\end{equation}

\paragraph{Contrastive Loss.}
To represent each hyperbolic factor $\mathbb{H}^d_i$, we adopt the Lorentz model with a learnable curvature $-\alpha_i$~\citep{Cannon1997, Nickel2018, Lee2018}.
See Appendix~\ref{appendix:implementations} for implementation details.
Following Definition~\ref{def:l1-product-metric}, we define the distance on the $\ell_1$-product metric space $(\mathbb{H}^d)^k$ and its averaged version as
\begin{equation}\label{eq:distance}\textstyle
    d_1(\bm{X}, \bm{Y}) = \sum_{i=1}^{k} d_{\mathbb{H}^d_i}(\bm{x}^{(i)}, \bm{y}^{(i)}),\quad d_{\text{avg}}(\bm{X}, \bm{Y}) = \frac{1}{k}d_1(\bm{X}, \bm{Y}).
\end{equation}

To pull an embedding $\bm{X}_b$ close to its positive pair $\bm{Y}_b$ while pushing it away from negatives $\bm{Y}_a$ for $a\neq b$, we use the standard InfoNCE loss~\citep{Radford2021, Desai2023, Pal2025}:
\begin{equation}\label{eq:contrastive_loss}\textstyle
    \mathcal{L}_{\text{cont}}(\{\bm{X}_b\}, \{\bm{Y}_b\}) = - \sum_{b \in B} \log \frac{\exp(- d_{\text{avg}}(\bm{X}_b, \bm{Y}_b) / \tau)}{\sum_{a\in B} \exp(- d_{\text{avg}}(\bm{X}_b, \bm{Y}_a) / \tau )}
\end{equation}
where $\tau$ is a learnable temperature parameter.
We average this loss over known pairs:
\begin{equation}\label{eq:contrastive_loss_all}\textstyle
    \!\!\!\mathcal{L}_{\text{cont}} = \frac{1}{4} ( \mathcal{L}_{\text{cont}}(\{\bm{I}_b\},\{\bm{T}_b\}) \!+\! \mathcal{L}_{\text{cont}}(\{\bm{T}_b\},\{\bm{I}_b\}) \!+\! \mathcal{L}_{\text{cont}}(\{\bm{I}_b^{\text{box}}\}, \{\bm{T}_b^{\text{box}}\}) \!+\! \mathcal{L}_{\text{cont}}(\{\bm{T}_b^{\text{box}}\}, \{\bm{I}_b^{\text{box}}\})).\!\!\!
\end{equation}

\paragraph{Entailment Loss.}
We also employ hyperbolic entailment cones to capture the entailment relations~\citep{Ganea2018}.
See Appendix~\ref{appendix:implementations} for implementation details.
For every point $\bm{y}^{(i)}$ in each hyperbolic factor $\mathbb{H}^d_i$, we define a geodesic conical region $C(\bm{y}^{(i)})$ with apex at $\bm{y}^{(i)}$ and half-aperture $\omega(\bm{y}^{(i)})$, where all points $\bm{x}^{(i)} \in C(\bm{y}^{(i)})$ are considered more specific than $\bm{y}^{(i)}$ (i.e., $\bm{x}^{(i)}\preceq\bm{y}^{(i)}$).
Then, $\bm{x}^{(i)} \in C(\bm{y}^{(i)})$ iff $\phi(\bm{x}^{(i)},\bm{y}^{(i)}) < \omega(\bm{y}^{(i)})$ for the exterior angle $\phi(\bm{x}^{(i)},\bm{y}^{(i)})$.
To penalize the violation of the inclusion relation $\bm{x}^{(i)}\in C(\bm{y}^{(i)})$ for a pair $(\bm{x}^{(i)},\bm{y}^{(i)})$ such that $\bm{x}^{(i)}\preceq\bm{y}^{(i)}$, the entailment loss $L_{\text{ent}}$ is calculated as
\begin{equation}\label{eq:entailment_loss}\textstyle
    L_{\text{ent},i}(\bm{X},\bm{Y}) = \max (0, \phi(\bm{x}^{(i)},\bm{y}^{(i)}) - \eta \omega(\bm{y}^{(i)})),\quad
    L_{\text{ent}}(\bm{X},\bm{Y}) = \frac{1}{k} \sum_{i=1}^{k} L_{\text{ent},i}(\bm{X},\bm{Y}),
\end{equation}
where hyperparameter $\eta$ controls the margin~\citep{Pal2025}.
We sum this loss over known pairs:
\begin{equation}\label{eq:entailment_loss_all}\textstyle
    \mathcal{L}_{\text{ent}} = \sum_{b\in B}\left( L_{\text{ent}}(\bm{I}_b, \bm{T}_b) + L_{\text{ent}}(\bm{I}_b^{\text{box}}, \bm{T}_b^{\text{box}}) + L_{\text{ent}}(\bm{I}_b, \bm{I}_b^{\text{box}}) + L_{\text{ent}}(\bm{T}_b, \bm{T}_b^{\text{box}}) \right).
\end{equation}

\paragraph{Discussions.}
PHyCLIP requires $k$ evaluations of $\operatorname{cosh}$ and $\operatorname{sinh}$ for the exponential maps and $k$ evaluations of $\operatorname{arcosh}$ for factor-wise distances.
These operations are fully parallelized across factors and are needed only once per embedding or distance.
In practice, the wall‑clock time is dominated by the Vision Transformer and text encoder, and the additional cost from these operations is negligible at the scale of our backbones.
PHyCLIP introduces only $k=64$ additional parameters to learn curvatures, which is negligible compared to the 86M parameters for the base Vision Transformer.

When the negative curvature $\alpha_i$ of a hyperbolic factor $\mathbb{H}^d_i$ is multiplied by $1/c^2$, the Riemannian metric is scaled by $c^2$, and the distance between any two points becomes $c$ times larger.
Hence, learning the negative curvature effectively corresponds to learning the combination weight of a weighted $\ell_1$-product metric.
Since the distance of each instance's embedding from the origin can be adjusted independently for each instance, a small curvature does not necessarily mean that the corresponding factor is emphasized. If all embedding points lie near the origin, that factor is effectively down-weighted.

\begin{table}[t]
    \centering
    \scriptsize
    \caption{Zero-shot image classification.}
    \label{tab:image_cls_result_table}
    \setlength{\tabcolsep}{2pt}
    \begin{tabular}{lc cccccc cccccc cccc}
                                              &                          & \multicolumn{6}{c}{\cellcolor{theme1Light}General datasets}   & \multicolumn{6}{c}{\cellcolor{theme2Light}Fine-grained datasets} & \multicolumn{4}{c}{\cellcolor{theme3Light}Specialized datasets}                                                                                                                                                                                                                                                                                                                                                                                                                                                                                                                                                                                                                                                                                                                                                                                                                                              \\
        \cmidrule(lr){3-8}\cmidrule(lr){9-14}\cmidrule(lr){15-18}
                                              & \rotatebox{90}{w/ boxes} & \begin{sideways}\cellcolor{theme1Light}ImageNet\end{sideways} & \begin{sideways}\cellcolor{theme1Light}CIFAR-10\end{sideways}    & \begin{sideways}\cellcolor{theme1Light}CIFAR-100\end{sideways}  & \begin{sideways}\cellcolor{theme1Light}SUN397\end{sideways} & \begin{sideways}\cellcolor{theme1Light}Caltech-101\end{sideways} & \begin{sideways}\cellcolor{theme1Light}STL-10\end{sideways} & \begin{sideways}\cellcolor{theme2Light}Food-101\end{sideways} & \begin{sideways}\cellcolor{theme2Light}CUB\end{sideways} & \begin{sideways}\cellcolor{theme2Light}Cars\end{sideways} & \begin{sideways}\cellcolor{theme2Light}Aircraft\end{sideways} & \begin{sideways}\cellcolor{theme2Light}Pets\end{sideways} & \begin{sideways}\cellcolor{theme2Light}Flowers\end{sideways} & \begin{sideways}\cellcolor{theme3Light}DTD\end{sideways} & \begin{sideways}\cellcolor{theme3Light}EuroSAT\end{sideways} & \begin{sideways}\cellcolor{theme3Light}RESISC45\end{sideways} & \begin{sideways}\cellcolor{theme3Light}Country211\end{sideways} \\
        \midrule        \multirow{1}{*}{CLIP} &                          & 38.87                                                         & 76.26                                                            & 48.19                                                           & 50.70                                                       & 73.62                                                            & 93.03                                                       & 51.19                                                         & 12.90                                                    & 7.82                                                      & 3.01                                                          & 45.89                                                     & \underline{21.16}                                            & 22.02                                                    & 35.73                                                        & 42.03                                                         & 5.13                                                            \\
        \multirow{1}{*}{CLIP}                 & \cmark                   & 38.81                                                         & 76.53                                                            & 48.59                                                           & 50.80                                                       & 74.29                                                            & 93.34                                                       & 51.05                                                         & 12.70                                                    & 8.40                                                      & 2.89                                                          & 46.19                                                     & \textbf{21.32}                                               & 21.74                                                    & \underline{37.49}                                            & 41.78                                                         & 5.10                                                            \\
        \multirow{1}{*}{MERU}                 &                          & 37.96                                                         & 77.63                                                            & 46.37                                                           & 49.39                                                       & 72.10                                                            & 93.14                                                       & 51.67                                                         & 11.09                                                    & 7.80                                                      & \underline{3.53}                                              & 43.36                                                     & 19.98                                                        & 22.18                                                    & \textbf{38.81}                                               & 41.77                                                         & 4.86                                                            \\
        \multirow{1}{*}{MERU}                 & \cmark                   & 38.08                                                         & 78.14                                                            & 46.80                                                           & 49.59                                                       & 72.69                                                            & 93.28                                                       & 51.92                                                         & 10.70                                                    & 7.77                                                      & \underline{3.53}                                              & 43.22                                                     & 18.31                                                        & 22.07                                                    & 37.31                                                        & 41.73                                                         & 5.01                                                            \\
        \multirow{1}{*}{HyCoCLIP}             & \cmark                   & \underline{43.80}                                             & \underline{89.00}                                                & \underline{58.59}                                               & \underline{54.49}                                           & \underline{76.14}                                                & \textbf{94.96}                                              & \underline{52.64}                                             & \underline{14.90}                                        & \underline{10.24}                                         & \textbf{3.57}                                                 & \underline{53.33}                                         & 19.41                                                        & \textbf{25.90}                                           & 36.36                                                        & \underline{46.97}                                             & \textbf{5.64}                                                   \\
        \multirow{1}{*}{\textbf{\modelname}}  & \cmark                   & \textbf{44.31}                                                & \textbf{89.33}                                                   & \textbf{59.05}                                                  & \textbf{55.32}                                              & \textbf{76.35}                                                   & \underline{94.84}                                           & \textbf{57.26}                                                & \textbf{15.90}                                           & \textbf{10.89}                                            & 3.24                                                          & \textbf{54.18}                                            & 19.98                                                        & \underline{25.50}                                        & 36.29                                                        & \textbf{48.22}                                                & \underline{5.56}                                                \\
        \midrule
    \end{tabular}
    \\
    \raggedright The best and second-best performances are emphasized by bold fonts and underlines, respectively.
\end{table}
\section{Experiments}
\label{sec:experiments}

\subsection{Training Details}
\label{sec:training_details}

\paragraph{Datasets.}
We trained all models on the Grounded Image--Text Pairs (GRIT) dataset~\citep{Peng2023}, which consists of automatically annotated image--text pairs with bounding boxes and corresponding nouns/phrases.
Although the dataset is documented to contain 20.5 million pairs with 35.9 million box annotations, we were able to obtain 14.0 million pairs with 26.6 million box annotations due to outdated public links.
This scale remains considerably larger than manually annotated resources such as Flickr30K Entities~\citep{Plummer2015}.

\paragraph{Baselines.}
We compare \modelname with CLIP~\citep{Radford2021}, MERU~\citep{Desai2023}, and HyCoCLIP~\citep{Pal2025}.
CLIP is a seminal vision--language model trained with contrastive learning based on the cosine similarity.
MERU extends CLIP by lifting embeddings to hyperbolic space and using hyperbolic entailment cones to represent hierarchy.
HyCoCLIP further leverages box annotations to better capture intra-modal hierarchy.
All models were trained from scratch on GRIT for fair comparison, and we report the average scores over three runs with fixed random seeds.
For \modelname, we set the number of factors to $k=64$ and the dimension of each factor to $d=8$, resulting in a total dimension of $512$.
We followed the training protocols and hyperparameters used in the official implementations of HyCoCLIP~\citep{Pal2025}; see Appendix~\ref{appendix:implementations} for details.
We report results obtained with the base Vision Transformer as an image encoder~\citep{Dosovitskiy2021}.
Supplementary results are provided in Appendix~\ref{appendix:additional-results-sub}.

\subsection{Experimental Results}
\paragraph{Zero-shot Image Classification.}
\label{sec:zero_shot_image_classification}
We evaluated the geometry of embedding space via zero-shot image classification, following the protocol standardized by CLIP~\citep{Radford2021}.
Images are classified using the similarity to the averaged embedding of template text queries for classes across 16 datasets, grouped into General, Fine-grained, and Specialized.
General datasets cover broad, heterogeneous concept families (e.g., animals, transportation, household objects).
Fine-grained datasets focus on visually similar subclasses within a single concept family (e.g., specific food, dog breeds).
Specialized datasets are domain-specific (e.g., texture images, satellite imagery).

Table~\ref{tab:image_cls_result_table} summarizes top-1 accuracies.
\modelname obtained consistent performance gains, particularly on General datasets, which we attribute to assigning concept families to hyperbolic factors that naturally support coarse-grained classifications.
Within Fine-grained datasets, \modelname achieved remarkable improvements on Food-101~\citep{Bossard2014} and Oxford-IIIT Pets~\citep{Parkhi2012}, implying that it also learned intra-family taxonomies without confusion with other families.
Although not best on every dataset, the performance gap on FGVC-Aircraft~\citep{Maji2013} is small, and the specialized datasets are out-of-distribution relative to GRIT.
Consistent with prior findings~\citep{Pal2025}, CLIP and MERU do not yield clear improvements with box annotations.
Overall, \modelname is the strongest zero-shot classifier among the comparison models.

\begin{table}[t]
    \centering
    \scriptsize
    \caption{Zero-shot retrieval and hierarchical classification.}
    \label{tab:retrieval_detection_hmetrics_table}
    \setlength{\tabcolsep}{2pt}
    \begin{tabular}{lc cccc cccc ccccc}
                                             & \multirow{3}{*}{\rotatebox{90}{w/ boxes\ }} & \multicolumn{4}{c}{\cellcolor{theme2Light}Text $\to$ Image} & \multicolumn{4}{c}{\cellcolor{theme2Light}Image $\to$ Text} & \multicolumn{5}{c}{\cellcolor{theme3Light}Hierarchical Classification}                                                                                                                                                                                                                                                                                                                                                                                                                                                                                    \\
        \cmidrule(lr){3-6}\cmidrule(lr){7-10}\cmidrule(lr){11-15}
                                             &                                             & \multicolumn{2}{c}{\cellcolor{theme2Light}COCO}             & \multicolumn{2}{c}{\cellcolor{theme2Light}Flickr}           & \multicolumn{2}{c}{\cellcolor{theme2Light}COCO}                        & \multicolumn{2}{c}{\cellcolor{theme2Light}Flickr} & \multicolumn{5}{c}{\cellcolor{theme3Light}WordNet}                                                                                                                                                                                                                                                                                                                                                                           \\
        \cmidrule(lr){3-4}\cmidrule(lr){5-6}\cmidrule(lr){7-8}\cmidrule(lr){9-10}\cmidrule(lr){11-15}
                                             &                                             & \multicolumn{1}{c}{\cellcolor{theme2Light}R@5}              & \multicolumn{1}{c}{\cellcolor{theme2Light}R@10}             & \multicolumn{1}{c}{\cellcolor{theme2Light}R@5}                         & \multicolumn{1}{c}{\cellcolor{theme2Light}R@10}   & \multicolumn{1}{c}{\cellcolor{theme2Light}R@5}     & \multicolumn{1}{c}{\cellcolor{theme2Light}R@10} & \multicolumn{1}{c}{\cellcolor{theme2Light}R@5} & \multicolumn{1}{c}{\cellcolor{theme2Light}R@10} & \cellcolor{theme3Light}TIE($\downarrow$) & \cellcolor{theme3Light}LCA($\downarrow$) & \cellcolor{theme3Light}$J$($\uparrow$) & \cellcolor{theme3Light}$P_H$($\uparrow$) & \cellcolor{theme3Light}$R_H$($\uparrow$) \\
        \midrule
        \multirow{1}{*}{CLIP}                &                                             & 56.29                                                       & 67.53                                                       & \underline{83.15}                                                      & 89.58                                             & 70.32                                              & 80.09                                           & \textbf{91.60}                                 & 95.60                                           & 3.750                                    & 2.276                                    & 0.7774                                 & 0.8471                                   & 0.8483                                   \\
        \multirow{1}{*}{CLIP}                & \cmark                                      & 56.20                                                       & 67.50                                                       & 82.75                                                                  & 89.42                                             & \underline{70.35}                                  & \underline{80.19}                               & 91.10                                          & 95.63                                           & 3.736                                    & 2.279                                    & 0.7784                                 & 0.8473                                   & 0.8501                                   \\
        \multirow{1}{*}{MERU}                &                                             & 55.73                                                       & 67.02                                                       & 82.15                                                                  & 89.05                                             & 69.57                                              & 79.33                                           & 90.77                                          & \textbf{95.83}                                  & 3.815                                    & 2.294                                    & 0.7733                                 & 0.8454                                   & 0.8450                                   \\
        \multirow{1}{*}{MERU}                & \cmark                                      & 55.87                                                       & 67.21                                                       & 81.96                                                                  & 88.89                                             & 69.70                                              & 79.69                                           & 91.20                                          & \textbf{95.83}                                  & 3.802                                    & 2.289                                    & 0.7740                                 & 0.8457                                   & 0.8455                                   \\
        \multirow{1}{*}{HyCoCLIP}            & \cmark                                      & \underline{57.11}                                           & \underline{68.32}                                           & 83.06                                                                  & \underline{89.63}                                 & 69.51                                              & 79.73                               & \underline{91.47}                              & 95.63                                           & \underline{3.319}                        & \underline{2.092}                        & \underline{0.8043}                     & \underline{0.8676}                       & \underline{0.8661}                       \\
        \multirow{1}{*}{\textbf{\modelname}} & \cmark                                      & \textbf{58.03}                                              & \textbf{69.05}                                              & \textbf{83.39}                                                         & \textbf{89.93}                                    & \textbf{70.94}                                     & \textbf{80.86}                                  & 91.20                                          & 95.53                                           & \textbf{3.294}                           & \textbf{2.083}                           & \textbf{0.8059}                        & \textbf{0.8684}                          & \textbf{0.8672}                          \\
        \midrule
    \end{tabular}
\end{table}

\paragraph{Zero-shot Image and Text Retrieval.}
\label{sec:zero_shot_image_and_text_retrieval}
We evaluate cross-modal alignment via zero-shot retrieval in the shared embedding space: given a text query, retrieve the nearest images, and vice versa.
This is also a standard benchmark for vision--language models~\citep{Radford2021}.
We used the COCO validation set~\citep{Lin2014} and the Flickr30K test set~\citep{Young2014, Karpathy2015}.
We report Recall at $k$ (R@$k$), the fraction of queries for which the paired instance appears in the top-$k$ retrieved results.

Results are summarized in the left half of Table~\ref{tab:retrieval_detection_hmetrics_table}.
\modelname achieves the best performance across almost combinations of performance measures and datasets, which supports our choice of the $\ell_1$-product metric in Eq.~\eqref{eq:distance}.
This metric sums distances over hyperbolic factors; when an object specified in the text is absent from a candidate image, or an unspecified object is present, the corresponding factor incurs a large penalty.
By contrast, a single hyperbolic space implicitly encodes the presence or absence of objects as hierarchical relations, which may weaken penalties for such mismatches and hinder separability of hard negatives.
All models perform competitively on the text retrieval for Flickr, suggesting the performance saturation.

\paragraph{Hierarchical Classification.}
\label{sec:hierarchical_classification}
We evaluate the expressivity for the \emph{is-a} taxonomy via hierarchical classification~\citep{Kosmopoulos2015, Pal2025} on ImageNet~\citep{Russakovsky2015}, where class labels are enriched by WordNet~\citep{Miller1995} and errors between predicted and ground-truth classes are measured on the WordNet graph with unit-length edges: Tree Induced Error (TIE) is their graph distance; Lowest Common Ancestor (LCA) error is the maximum of the distances to their LCA; Jaccard similarity $J$, hierarchical precision $P_H$, and hierarchical recall $R_H$ are similarities between the sets of ancestors.

Results are summarized in the right half of Table~\ref{tab:retrieval_detection_hmetrics_table}.
\modelname achieves superior scores across all metrics, indicating not only higher classification accuracy but also that misclassifications tend to be close to the ground-truth class in the taxonomy.
By handling cross-family compositionality via the $\ell_1$-product metric, each hyperbolic factor can devote capacity to a cleaner intra-family \emph{is-a} taxonomy, thereby yielding disentangled, hierarchy-aligned representations.
\begin{table}[!t]
    \centering
    \scriptsize
    \setlength{\tabcolsep}{2pt}
    \caption{Compositional understanding through hard-negative classification.}
    \label{tab:compositional_understanding_results}
  \begin{tabular}{lccccccccccccccc}
                        &                                              & \multicolumn{6}{c}{\cellcolor{theme1Light}VL-CheckList--Object} & \multicolumn{8}{c}{\cellcolor{theme2Light}SugarCrepe}                                                                                                                                                                                                                                                                                                                                                                                                                                     \\
    \cmidrule(lr){3-8} \cmidrule(lr){9-16}
                        &                                              & \multicolumn{3}{c}{\cellcolor{theme1Light}Location}             & \multicolumn{3}{c}{\cellcolor{theme1Light}Size}       & \multicolumn{3}{c}{\cellcolor{theme2Light}Replace} & \multicolumn{2}{c}{\cellcolor{theme2Light}Swap} & \multicolumn{2}{c}{\cellcolor{theme2Light}Add} & \cellcolor{theme2Light}                                                                                                                                                                                                                                                   \\
    \cmidrule(lr){3-5} \cmidrule(lr){6-8} \cmidrule(lr){9-11} \cmidrule(lr){12-13} \cmidrule(lr){14-15}
                        & \multirowcell{-3}{\rotatebox{90}{w/ boxes} } & \cellcolor{theme1Light}Center                                   & \cellcolor{theme1Light}Mid                            & \cellcolor{theme1Light}Margin                      & \cellcolor{theme1Light}Large                    & \cellcolor{theme1Light}Medium                  & \cellcolor{theme1Light}Small & \cellcolor{theme2Light}Obj & \cellcolor{theme2Light}Att & \cellcolor{theme2Light}Rel & \cellcolor{theme2Light}Obj & \cellcolor{theme2Light}Att & \cellcolor{theme2Light}Obj & \cellcolor{theme2Light}Att & \cellcolor{theme2Light} Overall \\
    \midrule
    CLIP                &                                              & 61.90                                                           & 60.30                                                 & 60.40                                              & 63.87                                           & 60.17                                          & 58.23                        & 89.37                      & 79.95                      & \underline{69.54}          & 60.54                      & 66.02                      & 80.39                      & 73.36                      & 77.72                           \\
    CLIP                & \cmark                                       & 61.93                                                           & 59.33                                                 & 60.83                                              & 63.70                                           & 60.80                                          & 58.07                        & 89.69                      & 80.33                      & 69.49                      & \textbf{61.63}             & \textbf{66.47}             & 80.62                      & 73.55                      & 77.97                           \\
    MERU                &                                              & 61.27                                                           & 59.03                                                 & 58.97                                              & 63.97                                           & 57.70                                          & 56.07                        & 89.10                      & \underline{80.50}          & 69.44                      & \underline{60.82}          & 65.32                      & 80.47                      & \underline{74.90}          & 77.81                           \\
    MERU                & \cmark                                       & 61.03                                                           & 58.47                                                 & 58.73                                              & 62.63                                           & 58.70                                          & 56.47                        & 89.39                      & 79.95                      & \textbf{69.65}             & 60.41                      & \underline{66.07}          & 80.41                      & \textbf{75.34}             & 77.93                           \\
    HyCoCLIP            & \cmark                                       & \underline{70.43}                                               & \underline{69.50}                                     & \underline{67.80}                                  & \underline{72.57}                               & \underline{66.13}                              & \underline{67.20}            & \textbf{91.38}             & 79.74                      & 67.24                      & 54.69                      & 63.66                      & \underline{82.57}          & 74.23                      & \underline{77.99}               \\
    \textbf{\modelname} & \cmark                                       & \textbf{71.20}                                                  & \textbf{70.30}                                        & \textbf{70.37}                                     & \textbf{73.73}                                  & \textbf{68.10}                                 & \textbf{67.83}               & \underline{91.06}          & \textbf{81.05}             & 66.36                      & 57.41                      & 65.87                      & \textbf{83.24}             & 73.80                      & \textbf{78.32}                  \\
    \midrule
  \end{tabular}\\
\end{table}

\paragraph{Compositional Understanding.}
\label{sec:compositional_understanding}
We assess the expressivity of compositionality via hard negative classification using VL-CheckList~\citep{Zhao2022} and SugarCrepe~\citep{Hsieh2023}.
Both benchmarks provide a kind of image-to-text retrieval task that require models to distinguish ground-truth captions from hard negatives created by altering objects, attributes, or relations in the ground-truth captions.
Following \citet{Pal2025}, we evaluate the Object subset of VL-CheckList, in which a noun for a single object in each caption is randomly replaced.
The results are summarized by the replaced object's location (center/mid/margin) and size (small/medium/large) in the image.
We also evaluate all seven subsets in SugarCrepe, in which objects, attributes, and relations are replaced, swapped, or added in each caption.

As shown in Table~\ref{tab:compositional_understanding_results}, \modelname yields a substantial improvement on VL-CheckList--Object.
It successfully represents object presence robustly with respect to location and size.
On SugarCrepe, \modelname obtains the best average score, which exceeds that of the second-best model by a large margin, whereas the other models cluster within a small range.
Performance on attribute replacement and swapping is robust, suggesting that our design decouples intra-family taxonomy from cross-family composition and thereby emphasizes attribute--object binding.
By contrast, we observe modest drops in relation replacement and object swapping, which implies that our design is less sensitive to inter-object relations, potentially because of its Boolean-like nature.
This suggests that even within image-to-text retrieval, models exhibit distinct strengths and weaknesses depending on the type of compositions.
Consequently, the performance variation observed in image-to-text retrieval on COCO and Flickr in Table~\ref{tab:retrieval_detection_hmetrics_table} is likely attributed to which types of hard negatives, and in what proportion, are included in each benchmark.

\begin{wraptable}{r}{0.50\textwidth}
    \raggedleft
    \vspace*{-5mm}
    \scriptsize
    \setlength{\tabcolsep}{2pt}
    \caption{Ablation study.}
    \label{tab:ablation}
  \begin{tabular}{cccc cc cc cc}
        \multirow{3}{*}{\rotatebox{90}{\# of factors, $k$\hspace*{3.5mm}}} & \multirow{3}{*}{\rotatebox{90}{\# of dims., $d$\hspace*{4.8mm}}} & \multirow{3}{*}{\rotatebox{90}{product metric\hspace*{3mm}}} & \multirow{3}{*}{\rotatebox{90}{curvature\hspace*{8mm}}} & \multicolumn{2}{c}{\cellcolor{theme1Light}classification} & \multicolumn{2}{c}{\cellcolor{theme2Light}retrieval} & \multicolumn{2}{c}{\cellcolor{theme3Light}hierarchical}                                                                                                                                       \\
                                                                           &                                                                  &                                                              &                                                         & \multicolumn{2}{c}{\cellcolor{theme1Light}}               & \multicolumn{2}{c}{\cellcolor{theme2Light}COCO, R@5} & \multicolumn{2}{c}{\cellcolor{theme3Light}}                                                                                                                                                   \\
        \cmidrule(lr){5-6} \cmidrule(lr){7-8} \cmidrule(lr){9-10}
                                                                           &                                                                  &                                                              &                                                         & \cellcolor{theme1Light}\rotatebox{90}{ImageNet\ \ }       & \cellcolor{theme1Light}\rotatebox{90}{Food-101}      & \cellcolor{theme2Light}\rotatebox{90}{Image}            & \cellcolor{theme2Light}\rotatebox{90}{Text} & \cellcolor{theme3Light}\rotatebox{90}{TIE} & \cellcolor{theme3Light}\rotatebox{90}{J} \\ \midrule
        1                                                                  & 512                                                              & --                                                           & hyp.                                                    & 43.80                                                     & 52.64                                                & 57.11                                                   & 69.51                                       & 3.319                                      & 0.8043                                   \\
        8                                                                  & 64                                                               & $\ell_1$                                                     & hyp.                                                    & \underline{44.38}                                         & 54.61                                                & 57.80                                                   & 70.80                                       & 3.273                                      & 0.8072                                   \\
        16                                                                 & 32                                                               & $\ell_1$                                                     & hyp.                                                    & 44.09                                                     & \underline{55.29}                                    & 57.26                                                   & 69.22                                       & \underline{3.287}                          & \textbf{0.8066}                       \\
        32                                                                 & 16                                                               & $\ell_1$                                                     & hyp.                                                    & 43.90                                                     & 54.48                                                & 56.70                                                   & 66.92                                       & 3.324                                      & 0.8035                                   \\
        64                                                                 & 8                                                                & $\ell_1$                                                     & hyp.                                                    & \textbf{44.31}                                            & \textbf{57.26}                                       & \textbf{58.03}                                          & \underline{70.94}                           & 3.294                                      & 0.8059                                   \\
        128                                                                & 4                                                                & $\ell_1$                                                     & hyp.                                                    & 44.16                                                     & 53.96                                                & \underline{57.79}                                       & \textbf{71.18}                              & \textbf{3.284}                             & \underline{0.8064}                          \\
        64                                                                 & 8                                                                & $\ell_2$                                                     & hyp.                                                    & 43.32                                                     & 53.39                                                & 57.09                                                   & 70.53                                       & 3.367                                      & 0.8011                                   \\
        64                                                                 & 8                                                                & $\ell_{\infty}$                                              & hyp.                                                    & \phantom{0}6.55                                           & 10.33                                                & \phantom{0}8.77                                         & 14.51                                       & 9.697                                      & 0.4247                                   \\
        -                                                                  & -                                                                & $\ell_2$                                                     & mixed                                                   & 39.34                                                     & 49.05                                                & 56.72                                                   & 70.81                                       & 3.712                                      & 0.7797                                   \\
        \bottomrule
    \end{tabular}
    \vspace*{-3mm}
\end{wraptable}

\paragraph{Ablation Study.}
We investigate the contributions of embedding space factorization and the $\ell_1$-product metric through ablation studies, summarized in Table~\ref{tab:ablation}.
We fix the total embedding dimension $kd$ and vary the number of factors, $k$.
When $k=1$ (equivalent to HyCoCLIP), performance is the lowest on most measures; increasing $k$ generally improves results, except for text retrieval, thereby demonstrating the benefit of factorization.
Most performance measures peak at $k=64$ or $k=128$, although zero-shot classification accuracy for Food-101~\citep{Bossard2014} drops substantially at $k=128$, indicating that overly fine factorization may impair the representation of intra-family taxonomy.
Replacing the $\ell_1$-product metric with the Riemannian ($\ell_2$) or $\ell_\infty$-product metric consistently degrades performance.
We also investigated the mixed-curvature model of the Euclidean, hyperbolic, and spherical spaces of 256, 128, and 128 dimensions, respectively~\citep{Gu2019, Wang2024, Gao2025}, and found an overall inferior performance.
These results support that the $\ell_1$-product metric provides a more effective way to aggregate cross-family composition.

\subsection{Visualizations of Hyperbolic Factors}\label{sec:visualization}

\begin{figure}[t]
    \centering
    \begin{minipage}{\textwidth}
        \centering
        \begin{subfigure}[b]{0.32\textwidth}
            \centering
            \includegraphics[width=\textwidth,bb=0 20 323 155,clip]{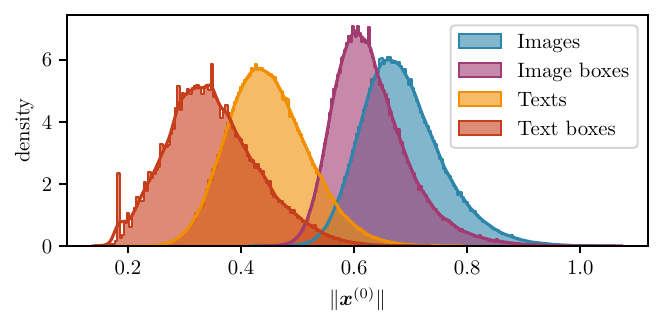}\\[-2.5mm]
            \scalebox{0.6}{$\|\bm{x}^{(0)}\|_{\mathbb{H}^d_0}$}
            \\[-1mm]
            \caption{\modelname (factor-wise norms)}
            \label{fig:norm_dist_phyclip}
        \end{subfigure}
        \begin{subfigure}[b]{0.32\textwidth}
            \centering
            \includegraphics[width=\textwidth,bb=0 20 323 155,clip]{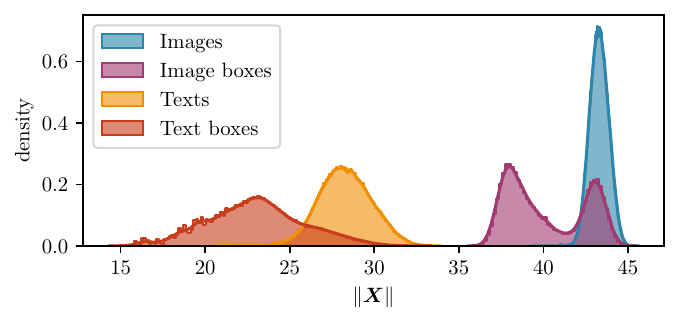}\\[-2.5mm]
            \scalebox{0.6}{$\|\bm{X}\|_{(\mathbb{H}^d)^k}$}
            \\[-1mm]
            \caption{\modelname (overall norms)}
            \label{fig:norm_all_dist_phyclip}
        \end{subfigure}
        \begin{subfigure}[b]{0.32\textwidth}
            \centering
            \includegraphics[width=\textwidth,bb=0 20 323 155,clip]{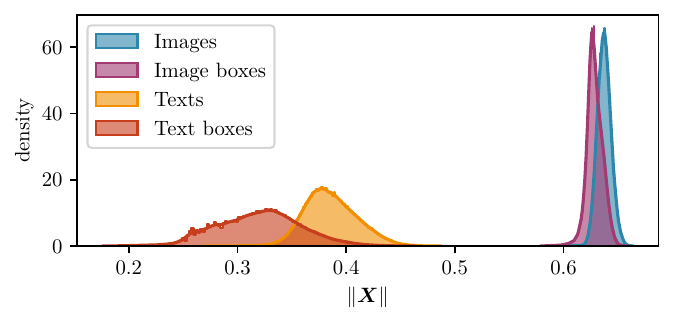}\\[-2.5mm]
            \scalebox{0.6}{$\|\bm{X}\|_{\mathbb{H}^{kd}}$}
            \\[-1mm]
            \caption{HyCoCLIP}
            \label{fig:norm_dist_hycoclip}
        \end{subfigure}
        \caption{\textbf{Norm distributions.}
            In (b) and (c), image norms are consistently larger than text norms, because images are more specific than their paired texts ($I_b \preceq T_b$).
            However, in a single hyperbolic factor shown in (a), image and text norms largely overlap, as \modelname may keep some factors unused for instances that do not contain the corresponding concept families.}
        \label{fig:norm_dist}
    \end{minipage}
\end{figure}

\paragraph{Norm Distributions.}
Figure~\ref{fig:norm_dist} plots the empirical distributions of embedding norms.
(b) and (c) show that, in both \modelname and HyCoCLIP, image norms are consistently larger than text norms and are tightly concentrated.
These models consider images to be more specific than their paired texts, $I_b \preceq T_b$, which encourages the image embedding $\bm{I}_b$ to lie within the text's hyperbolic entailment cone $C(\bm{T}_b)$ (i.e., $\bm{I}_b \in C(\bm{T}_b)$), yielding larger image norms.
However, (a) shows that, within individual hyperbolic factors of \modelname, the image and text distributions largely overlap and are broadly dispersed.
This is because instances without a particular concept family lie near the origin in the corresponding factor, in other words, do not use unrelated factors.
Consequently, \modelname leverages a broader portion of the embedding space and facilitates meaningful distances and taxonomic structures.

\begin{figure}[t]
    \centering
    \begin{minipage}{0.6\textwidth}
    \footnotesize
        \centering
        \includegraphics[width=1.0\textwidth]{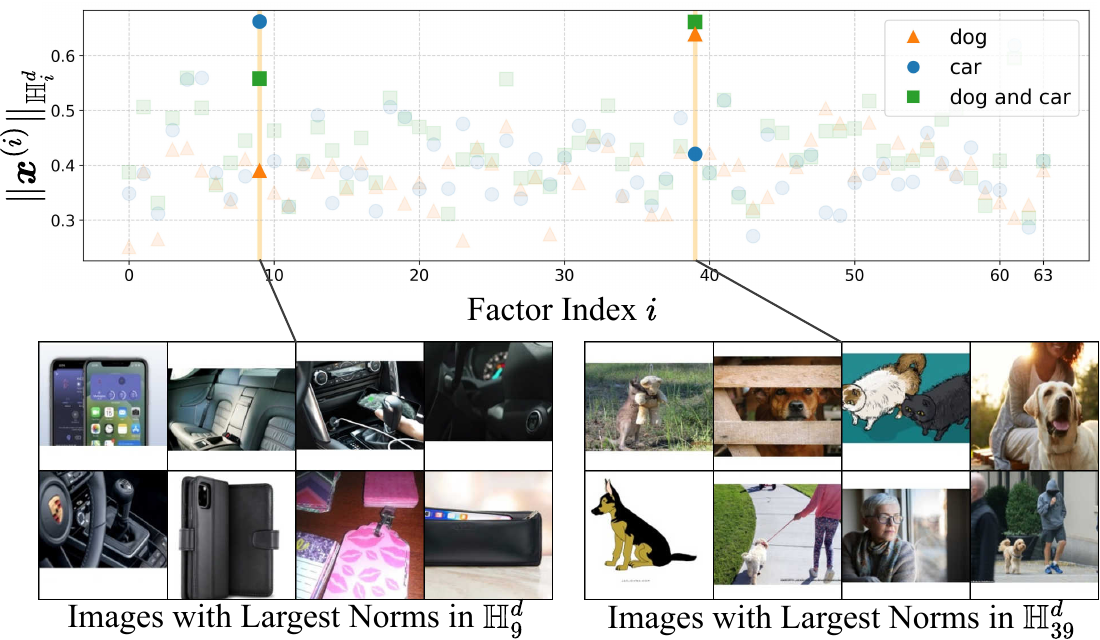}\\
        (a) Embedding norms of single-concept and conjunctive prompts.\\
    \end{minipage}
    \hfill
    \begin{minipage}{0.38\textwidth}
    \footnotesize
        \centering
        \includegraphics[width=4.5cm]{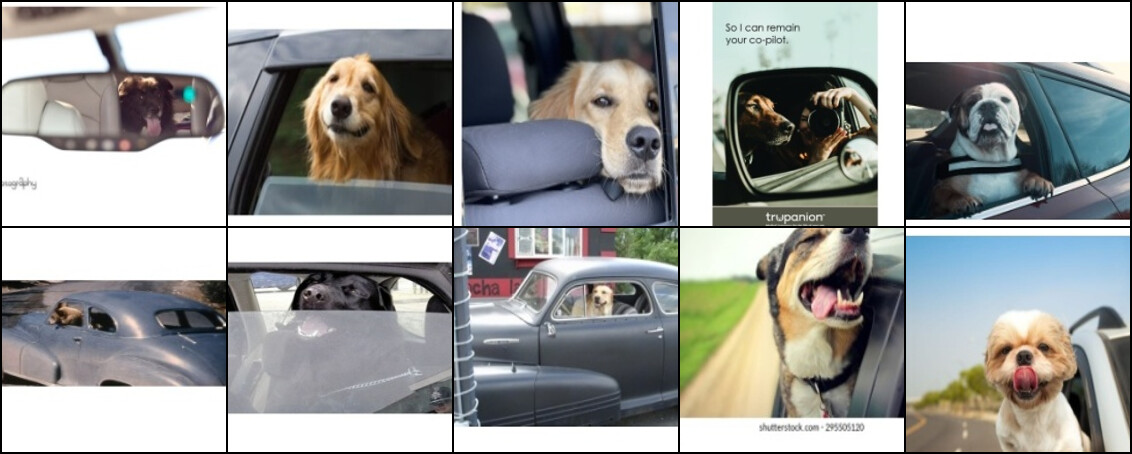}    \\
        (b) Images retrieved by ``$\max$'' of \\ the single-concept prompts.\\[1mm]
        \includegraphics[width=4.5cm]{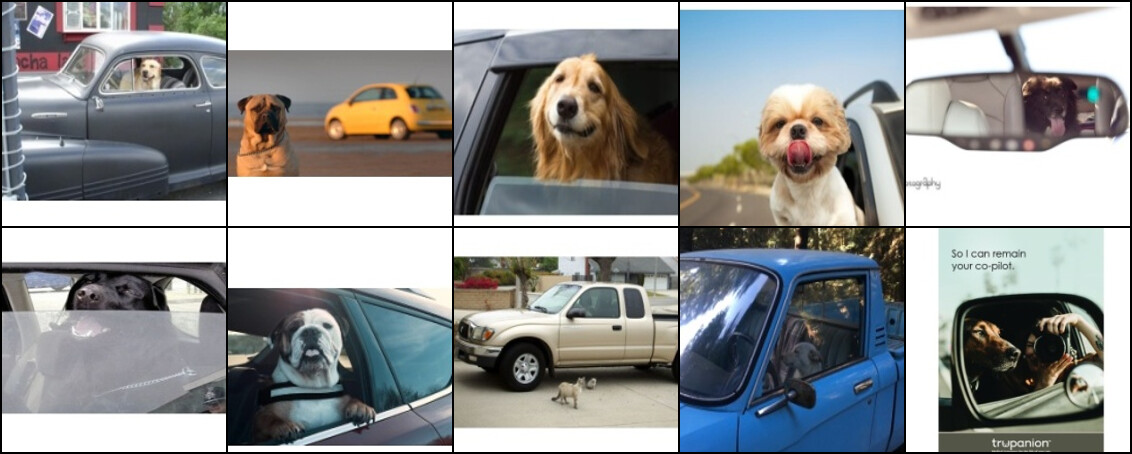}       \\
        (c) Images retrieved by \\ the conjunctive prompt.\\
    \end{minipage}
    \caption{\textbf{Factor-wise embeddings and retrievals.}
        (a) Single-concept prompts (e.g., ``a dog'' or ``a car'') activate distinct factors (i.e., $i=39$ or $i=9$), and their textual composition (e.g., ``a dog and a car'') activates the corresponding factors simultaneously.
        (b)--(c) ``$\max$'' of the single-concept prompts retrieves images similarly to the textual composition.
        See also Fig.~\ref{fig:composition_ell1_appendix} in Appendix~\ref{appendix:additional-visualizations}.}
    \label{fig:composition_ell1}
    \vspace*{4mm}
    \includegraphics[bb=0 20 950 300,width=\textwidth,clip]{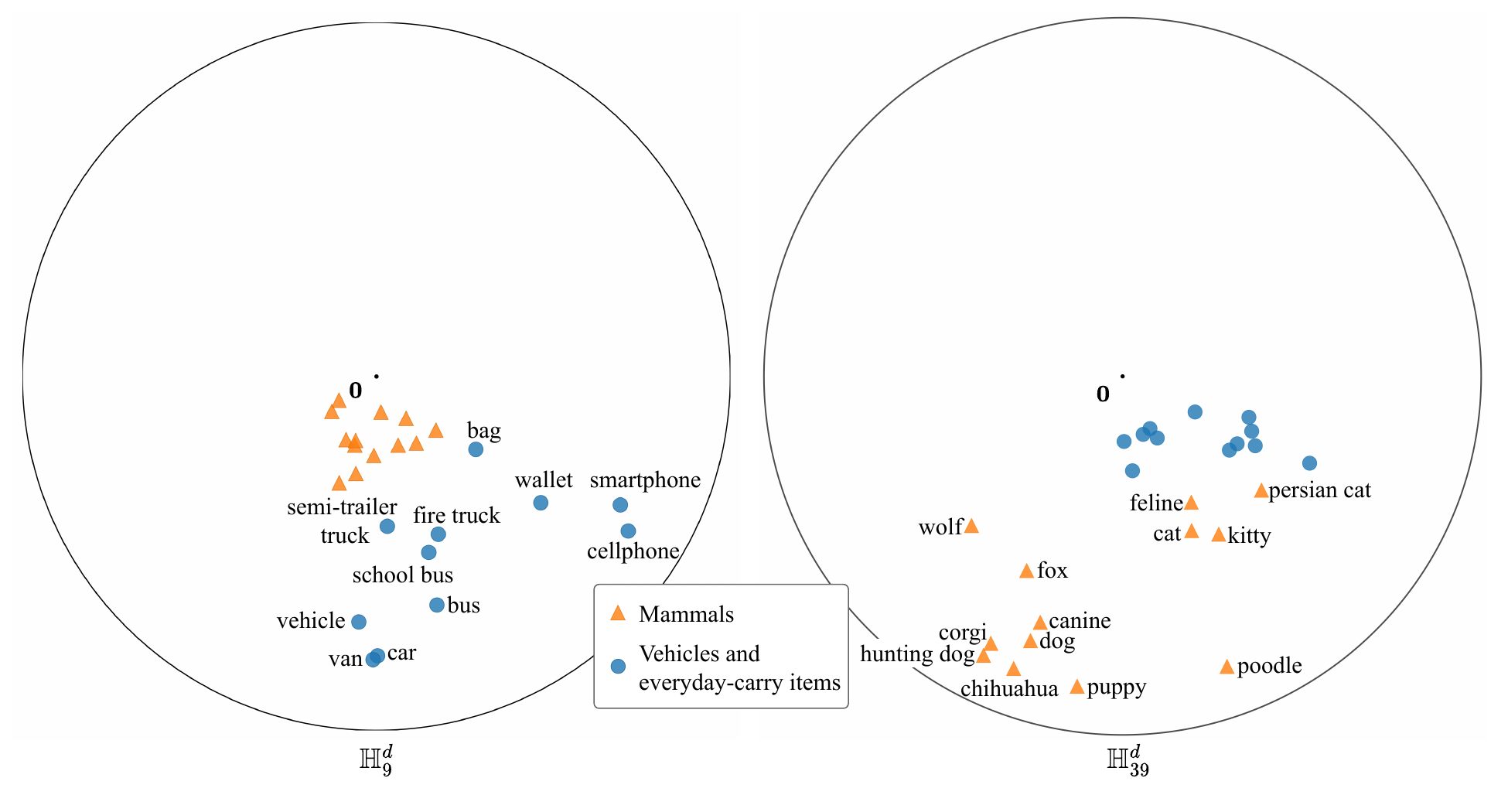}
    \caption{
        \textbf{Embeddings projected onto 2D disks by HoroPCA.}
        A set of relevant concepts (hyponyms of mammals or words related to vehicles and everyday-carry items) forms a hierarchical structure in the corresponding factor ($i=39$ or $i=9$), while the same concepts cluster near the origin in another.
        \label{fig:hierarchy_visualization}
    }
\end{figure}

\paragraph{Intra-Family Hierarchies in Hyperbolic Factors.}
We obtain factor-wise embeddings $\bm{x}^{(i)}$ of single-concept prompts (e.g., ``a photo of a dog'' and ``a photo of a car'') and summarize their norms $\|\bm{x}^{(i)}\|_{\mathbb{H}^d_i}$ across $k=64$ factors in Fig.~\ref{fig:composition_ell1} (a).
The ``dog'' embedding exhibits its largest norm in factor $i=39$ while remaining near the origin in factor $i=9$.
Conversely, the ``car'' embedding peaks in factor $i=9$ and is suppressed in factor $i=39$.
We also visualize the top $0.1\%$ of GRIT images by the embedding norm for each factor.
Factor $i=39$ yields various mammals, whereas factor $i=9$ shows vehicles and everyday-carry items.

Figure~\ref{fig:hierarchy_visualization} visualizes embeddings of various single-concept prompts projected by HoroPCA~\citep{Chami2021} from $d=8$-dimensional hyperbolic factors onto 2D disks.
Embeddings of mammal-related terms are spread over a wide area in factor~$i=39$.
Dog-related terms cluster in the left half, cat-related terms in the right, and ``chihuahua,'' ``corgi,'' and ``puppy'' are positioned farther from the origin than ``dog.''
These patterns indicate that factor~$i=39$ encodes the \emph{is-a} taxonomy of mammals (or more specifically, Carnivora).
In contrast, in factor~$i=9$, the same embeddings concentrate near the origin, suggesting that \modelname does not use this factor to represent mammals.
Embeddings related to vehicles and everyday-carry items exhibit the opposite pattern; they form a hierarchical arrangement in factor $i=9$ and cluster near the origin in factor $i=39$.

\paragraph{Composition via $\ell_1$-Product Metric.}
To examine the behavior of the $\ell_1$-product metric, we also obtain the embeddings of the textual composition (e.g., ``a photo of a dog and a car'').
Figure~\ref{fig:composition_ell1} (a) shows that such a prompt produces large norms in \emph{both} factors~$i=39$ and $i=9$, meaning that composing concepts simultaneously activates the corresponding factors.
Furthermore, we took the factor-wise ``$\max$'' of the embeddings of two single-concept prompts and found that it retrieves images similarly to the textual composition, as shown in (b)--(c) (see Appendix~\ref{appendix:additional-visualizations} for details).
These patterns align with the behavior of a Boolean algebra, where multiple concepts are specified by the union of concept subsets (or the element-wise $\max$ of binary indicators).

We observe the same pattern for ``boy and bicycle'' and ``sunset and ocean.''
Consistent with Theorem~\ref{thm:tree-to-Hprod}, these observations empirically support that \modelname\ organizes intra-family taxonomies within individual hyperbolic factors and expresses cross-family compositionality via the simultaneous activation of multiple factors, analogous to a Boolean algebra.
We emphasize that, while we provide hierarchy between images, texts, and their cropped versions, we do not provide any explicit supervision for factor assignments or hierarchy between atomic concepts (i.e., words); the specialization of factors and hierarchical structures of atomic concepts emerge automatically through training.

\section{Related Work}
\label{sec:related_work}

\paragraph{Vision--Language Models and Representation Learning.}
Vision--language representation learning contributes to retrieval~\citep{Mori1999}, semantic segmentation~\citep{Barnard2003}, and image generation~\citep{Ramesh2021, Labs2025}.
Early works learned alignments through object-level, word-based classification and detection~\citep{Karpathy2015, He2017, Engilberge2018} or through text--image generation~\citep{Peng2017, Gu2018}, but they often required complex annotation and network designs~\citep{Zhao2022}.
A more generic approach maps an entire image or text to a single vector and learns a shared embedding space with a contrastive objective.
Representative systems include DeViSE~\citep{Frome2013}, VSE++~\citep{Faghri2018}, CLIP~\citep{Radford2021}, and ALIGN~\citep{Jia2021}.
Our model, \modelname, follows this line, while implicitly extracting individual concepts through the geometry of an $\ell_1$-product metric space.

\paragraph{Hyperbolic Representations in Deep Learning.}
Data often exhibit hierarchical, tree-like structures.
Many approaches have attempted to encode such structure~\citep{Nguyen2017, Vulic2018}, and hyperbolic spaces have become influential due to their empirical performance and theoretical support~\citep{Sala2018, Sonthalia2020,Spengler2025}.
As discussed in Section~\ref{sec:preliminary}, tree metrics admit quasi-isometric embeddings into the two-dimensional hyperbolic plane, which enhances generalization and interpretability~\citep{Bridson1999, Sarkar2011}.
Hyperbolic embeddings have been applied to words~\citep{Nickel2017, Nickel2018, Tifrea2019}, sentences~\citep{Dhingra2018}, graphs~\citep{Liu2019}, and images~\citep{Khrulkov2020, Atigh2022, Spengler2023, Qiu2024}.
There is also extensive work on building neural networks on hyperbolic spaces~\citep{Ganea2018a, Shimizu2021, Takeuchi2022, Peng2022} and on optimization over Riemannian manifolds~\citep{Bonnabel2013, Becigneul2019}.
Within vision--language learning, MERU adapts CLIP to hyperbolic geometry~\citep{Desai2023}.
Our method also leverages hyperbolic geometry and embeds tree-like structures efficiently.

For non-hierarchical data, Euclidean, hyperspherical, or toroidal geometries can be effective~\citep{Ebisu2018}, and several studies explore representations in a Riemannian ($\ell_2$) product of such spaces as mixed-curvature representations~\citep{Gu2019, Wang2024, Gao2025}.
\modelname\ also employs a product space, but all factors are hyperbolic and the product metric is $\ell_1$; we justified both choices theoretically in Section~\ref{sec:preliminary}.

\paragraph{Region-based Embeddings for Structured Representations.}
Hierarchical relations can be viewed as a form of inclusion relations.
Order embeddings~\citep{Vendrov2016} represent an instance as an upper orthant of Euclidean space, and box embeddings~\citep{Vilnis2018} represent it as an axis-aligned hyperrectangle, where the inverse of set inclusion encodes the hierarchical relation.
Euclidean variants include Gaussian embeddings~\citep{Vilnis2015}, and hyperbolic variants include disk embeddings~\citep{Nickel2018} and hyperbolic entailment cones~\citep{Ganea2018}.
These approaches have also been employed in the vision--language setting~\citep{Ren2016,Desai2023, Pal2025}.
We summarize their theoretical connections in Appendix~\ref{appendix:poset_lattice}.
These region-based approaches permit composition via intersection of regions, which allows multiple parents and richer semantic composition.
However, their compositional expressivity has not yet been fully characterized.
We showed in Section~\ref{sec:preliminary} that order embeddings and \modelname\ support compositionality at the level of a Boolean algebra, while a single hyperbolic space may not.

\section{Conclusion}
\label{conclusion}

We introduced \modelname, a vision--language model that learns representations using an $\ell_1$-product metric space of hyperbolic factors.
We theoretically and empirically demonstrated that it simultaneously captures compositionality across concept families through the $\ell_1$-product metric, as well as \emph{is-a} taxonomies within hyperbolic spaces via hyperbolic embeddings.
This design yields state-of-the-art performance across various downstream tasks and provides an interpretable embedding structure.
While the main focus is on object composition, it also performs well for attribute binding because it decouples intra-family taxonomy from cross-family composition.
By contrast, the relational structure remains unexplored; incorporating its algebraic structure is a promising direction for future work.

\newpage

\section*{Acknowledgments}
This study was partly supported by JST BOOST (JPMJBY24H0) and CREST (JPMJCR24Q5), and partly achieved through the use of SQUID at D3 Center, Osaka University.

\section*{Ethics Statement}
This study is purely focused on vision--language representation learning, and it is not expected to have any direct negative impact on society or individuals.

\section*{Reproducibility Statement}
The environment, datasets, methods, evaluation metrics, and other experimental settings are provided in Section~\ref{sec:experiments} and Appendix~\ref{appendix:implementations}.
For full reproducibility, the source code is attached as supplementary material.


\newpage
\appendix

\section{Background Theory}\label{appendix:background}

\subsection{Quasi-isometric Embeddings}
We adopt standard notions from \citet{Bridson1999}.
Let $(X,d_X)$ be a metric space.
A geodesic segment $[x,y]\subset X$ is an isometric image of an interval whose endpoints are mapped to $x$ and $y$ in $X$.
The space $X$ is \emph{geodesic} if every pair of points can be joined by a geodesic segment.
For $\delta\ge 0$, a geodesic triangle is \emph{$\delta$-slim} if each side is contained in the $\delta$-neighborhood of the union of the other two sides.
A geodesic metric space $X$ is \emph{$\delta$-hyperbolic} (in the sense of Gromov) if every geodesic triangle in $X$ is $\delta$-slim.
A metric tree is a geodesic metric space in which any two nodes are joined by a unique geodesic, and every geodesic triangle is a tripod; hence, it is $0$-hyperbolic.
Any tree with positive edge lengths, equipped with path length as distance, is a metric tree.
Euclidean spaces $\mathbb{R}^n$ are not Gromov-hyperbolic for $n\ge 2$, whereas hyperbolic spaces $\mathbb{H}^n$ are $\delta$-hyperbolic, where $\delta$ depends only on the curvature.

\begin{definition}[Quasi-isometric embedding~\citep{Bridson1999}]
    Let $(X,d_X)$ and $(Y,d_Y)$ be metric spaces.
    A map $f:(X,d_X)\to (Y,d_Y)$ is a $(\lambda,c)$-quasi-isometric embedding if it is injective and there exist a distortion $\lambda\ge 1$ and an error $c\ge 0$ such that
    \begin{equation}\textstyle
        \frac{1}{\lambda}\,d_X(\bm{x},\bm{x}')-c \ \le\ d_Y\!\bigl(f(\bm{x}),f(\bm{x}')\bigr) \ \le\ \lambda\, d_X(\bm{x},\bm{x}')+c
        \quad \text{for all }\bm{x},\bm{x}'\in X.
    \end{equation}
    If $\lambda=1$ and $c=0$, the embedding is \emph{isometric}.
\end{definition}

\subsection{Representations of Poset and Lattice}\label{appendix:poset_lattice}
A \emph{poset} $(P,\preceq)$ is a set $P$ with a reflexive, antisymmetric, and transitive relation $\preceq$.
Its Hasse diagram places an edge from $\bm{x}$ to $\bm{y}$ when $\bm{x}\prec\bm{y}$ and there is no $\bm{z}$ such that $\bm{x}\prec\bm{z}\prec\bm{y}$; hence the existence of an upward path from $\bm{x}$ to $\bm{y}$ implies $\bm{x}\preceq\bm{y}$~\citep{Ganter1999, Davey2002}.
Given $\bm{x},\bm{y}\in P$, a \emph{meet} $\bm{x}\sqcap \bm{y}$ (greatest lower bound) and a \emph{join} $\bm{x}\sqcup \bm{y}$ (least upper bound) may or may not exist.
A \emph{meet-semilattice} (\emph{join-semilattice}) is a poset in which the meet $\bm{x}\sqcap\bm{y}$ (the join $\bm{x}\sqcup\bm{y}$) exists for all pairs $\bm{x},\bm{y}$, and a \emph{lattice} has both for all pairs.

If a rooted tree is ordered by the ancestor relation with the root $\bm{o}$ at the bottom (so that $\bm{o}\preceq\bm{x}$ for all $\bm{x}$), then the meet $\bm{x}\sqcap\bm{y}$ of a pair $\bm{x},\bm{y}$ always exists, while joins need not exist.
Hence, a rooted tree is naturally a meet-semilattice in this orientation.
Conversely, an \emph{is-a} taxonomy often uses the \emph{entailment} order $\bm{x}\preceq\bm{y}$ interpreted as ``$\bm{x}$ entails $\bm{y}$'' or more roughly ``$\bm{x}$ is more specific than $\bm{y}$,'' with the root $\bm{o}$ at the top.
The join $\bm{x}\sqcup\bm{y}$ always exists, while meets need not exist; the poset is then a join-semilattice.

Let $\mathcal{C}=\{c_1,\dots,c_n\}$ be $n$ atomic concepts (e.g., \textsf{dog}, \textsf{car}, \textsf{tomato},\dots).
A subset $S\subseteq\mathcal{C}$ expresses the conjunction or co-occurrence of the concepts specified in $S$.
We define the \emph{Boolean lattice} $(2^{\mathcal{C}},\subseteq)$ over the power set $2^{\mathcal{C}}$ of $\mathcal{C}$, in which the order relation $\preceq$ is the inclusion relation $\subseteq$.
Meet/join are given by intersection/union, respectively.
$T\subseteq S$ means that $S$ specifies all concepts in $T$, so $S$ entails $T$.
Let $\chi:2^{\mathcal{C}}\to\{0,1\}^n$ be the indicator map with $\chi(S)_i=1$ iff $c_i\in S$.
Then, $T\preceq S$ iff $\chi(T)_i\le \chi(S)_i$ for all $i$, and meet/join become bit-wise $\operatorname{AND}$/$\operatorname{OR}$, respectively.
We summarize the correspondence between different representations in Table~\ref{tab:correspondence}.
In this lattice, each node is defined \emph{intensionally} as a set of concepts.

From the dual perspective, each node can be defined \emph{extensionally} as a set of instances that contain specified concepts, in the context of formal concept analysis~\citep{Ganter1999}.
Let $\mathcal{Z}$ be a universe of instances and let $I\subseteq \mathcal{Z}\times \mathcal{C}$ be an incidence relation (i.e., $\bm{z}\,I\,c$ means that $\bm{z}$ has concept $c$).
For $S\subseteq\mathcal{C}$, define an operation $S' = \{\bm{z}\in\mathcal{Z}\mid \bm{z}\,I\,c\text{ for all } c\in S\subseteq\mathcal{C}\}$, which forms a Galois connection: $S\subseteq T$ implies $T'\subseteq S'$.
Also, subsets $S'\subseteq\mathcal{Z}$ form the dual lattice of $(2^{\mathcal{C}},\subseteq)$, where $S\subseteq T\Leftrightarrow S\preceq T \Rightarrow T'\subseteq S' \Leftrightarrow T'\preceq S'$.
If $S=S''$ for any subset $S\subseteq\mathcal{C}$, $S\subseteq T\Leftrightarrow T'\subseteq S'$.

An \emph{is-a} taxonomy is typically realized as a join-subsemilattice of this dual lattice.
Order embeddings~\citep{Vendrov2016} can be regarded as an extension of the Boolean lattice, where each bit $\{0,1\}$ is replaced with a real number $\mathbb{R}$.
They declare ``$\bm{x}$ entails $\bm{y}$'' iff $x_i\ge y_i$ for all $i$, similarly to the indicators of a Boolean lattice.
Indeed, the ambient poset $(\mathbb{R}^n,\preceq)$ of order embeddings is a lattice with meet/join given by coordinate-wise $\max$/$\min$, respectively.
When regarding an embedding $\bm{x}$ as an orthant $U(\bm{x})\subseteq\mathbb{R}^n$, the entailment is represented as $U(\bm{x})\subseteq U(\bm{y})$, similarly to the dual lattice.
When treating the orthant $U(\bm{y})$ as the set of all instances that contain the specified concepts $\bm{y}$, the entailment is represented as $\bm{x}\in U(\bm{y})$, which aligns with the definition of the dual lattice.
Hyperbolic entailment cones~\citep{Ganea2018} are a hyperbolic extension of the last interpretation of order embeddings, where an orthant $U(\bm{y})$ is replaced with a geodesic conical region $C(\bm{y})$.

Also, our proposed \modelname can be regarded as an extension of a Boolean lattice, where each bit $\{0,1\}$ is replaced with a metric tree $T_i$, which is embedded into a hyperbolic factor $\mathbb{H}^d_i$.

\begin{table}[t]
    \caption{Correspondence of generalization, specialization, and entailment in different representations.}
    \label{tab:correspondence}
    \centering\scriptsize
    \setlength{\tabcolsep}{4pt}
    \begin{tabular}{lllll}
        \toprule
                                                                      & Generalization       & Specialization      & Space                         & Entailment                                            \\
                                                                      & (hypernymy)          & (hyponymy)          &                               & ($\bm{x}$ or $S$ entails $\bm{y}$ or $T$)             \\
        \midrule
        \textbf{Tree of \emph{is-a} Relations (\emph{is-a} Taxonomy)} & join $\sqcup$        & (meet $\sqcap$)     & $T$                           & $\bm{x}\preceq\bm{y}$                                 \\
        \textbf{Order Embedding (as points)}                          & $\min$               & $\max$              & $\mathbb{R}^n$                & $x_i\ge y_i\text{ for all } i$                        \\
        \textbf{Order Embedding (as orthants)}                        &                      &                     & orthants in $\mathbb{R}^n$    & $ U(\bm{x})\subseteq U(\bm{y})$                       \\
        \textbf{Order Embedding (for entailment)}                     &                      &                     & orthants in $\mathbb{R}^n$    & $ \bm{x}\in U(\bm{y})$                                \\
        \textbf{Hyperbolic Entailment Cone}                           & (union $\cup$)       & intersection $\cap$ & cones in $\mathbb{H}^n$       & $\bm{x}\in C(\bm{y})$                                 \\
        \midrule
        \textbf{Boolean Lattice (as a power set)}                     & intersection $\cap$  & union $\cup$        & $2^{\mathcal{C}}$             & $S\supseteq T$                                        \\
        \textbf{Boolean Lattice (as a lattice)}                       & meet $\sqcap$        & join $\sqcup$       &                               & $S\succeq T$                                          \\
        \textbf{Boolean Lattice (with indicator)}                     & $\operatorname{AND}$ & $\operatorname{OR}$ & $\{0,1\}^{|\mathcal{C}|}$     & $\chi(S)_i\ge \chi(T)_i\text{ for all } i$            \\
        \midrule
        \textbf{Dual Lattice (as a set)}                              & union $\cup$         & intersection $\cap$ &                               & $S'\subseteq T'$                                      \\
        \textbf{Dual Lattice (as a lattice)}                          & join $\sqcup$        & meet $\sqcap$       &                               & $S'\preceq T'$                                        \\
        \midrule
        \textbf{Product of Trees}                                     & join $\sqcup$        & (meet $\sqcap$)     & $\prod_{i=1}^k T_i$           & $\bm{x}^{(i)}\preceq \bm{y}^{(i)}\text{ for all } i$  \\
        \textbf{\modelname}                                           & (union $\cup$)       & intersection $\cap$ & cones in $(\mathbb{H}^d_i)^k$ & $\bm{x}^{(i)}\in C_i(\bm{y}^{(i)})\text{ for all } i$ \\
        \bottomrule
    \end{tabular}
\end{table}

\section{Propositions, Theorems, and Proofs}\label{appendix:proofs}

\subsection{Proof of Proposition~\ref{prop:embedding-boolean-lattice}}
Let $(2^{\mathcal{C}},\subseteq)$ be a Boolean lattice over all subsets of atomic concepts $\mathcal{C}=\{c_1,\dots,c_n\}$.
The indicator $\chi$ maps subsets $S,T\subseteq\mathcal{C}$ to binary sequences $\chi(S),\chi(T)\in\{0,1\}^n$, where $\chi(S)_i=1$ if $c_i\in S$ and $\chi(S)_i=0$ otherwise.
Then, $S\subseteq T$ iff $\chi(S)_i\le\chi(T)_i$ for all $i$.
The Hamming distance $d_{\mathrm{Ham}}$ is defined as $d_{\mathrm{Ham}}(\chi(S),\chi(T))=\sum_{i=1}^n |\chi(S)_i-\chi(T)_i|$.
Consider a map $f:\{0,1\}^n\to\mathbb{R}^n,\chi(S)\mapsto \bm{x}=(x_1,\dots,x_n)=(1-\chi(S)_1,\dots,1-\chi(S)_n)$ and the product order $\bm{x}\preceq\bm{y}$ iff $x_i\ge y_i$ for all $i$ on $\mathbb{R}^n$.
Then, the map $f\circ \chi$ embeds the Boolean lattice $(2^{\mathcal{C}},\subseteq)$ into the poset $(\mathbb{R}^n,\preceq)$ used by order embeddings while preserving the order relations.

By definition, the Hamming distance is $d_{\mathrm{Ham}}(\chi(S),\chi(T))=\|\chi(S)-\chi(T)\|_1=\sum_{i=1}^n |\chi(S)_i-\chi(T)_i|$.
Hence, the metric space $(\{0,1\}^n,d_{\mathrm{Ham}})$ is equivalent to an $\ell_1$-product metric space $(\{0,1\}^n,\sum_{i=1}^n|\cdot|)$.
Consider a map $f_i$ that maps 0 to the base point of the metric space $X_i$ and 1 to a point with a finite non-zero distance $1/\tau_i>0$ from the base point.
The map $f_i$ is an isometric embedding from $\{0,1\}$ to $X_i$ when scaled by $\tau_i$.
The map $f=(f_1,\dots,f_n)$ is an isometric embedding from $(\{0,1\}^n,d_{\mathrm{Ham}})$ to $(\prod_{i=1}^k \tau_iX_i,\sum_{i=1}^k d_{X_i})$ for any $k\ge n$.

Assume by contradiction that an isometric embedding $f:(\{0,1\}^n,d_{\mathrm{Ham}})\to(\mathbb{H}^d,d_{\mathbb{H}^d})$ exists for some $n\ge 2$ and $d\ge 2$.
Take four points
\begin{equation*}
    A=(0,0,0,\dots),\quad B=(1,0,0,\dots),\quad C=(1,1,0,\dots),\quad D=(0,1,0,\dots).
\end{equation*}
in $\{0,1\}^n$.
Let $\bm{a}=f(A)$, $\bm{b}=f(B)$, $\bm{d}=f(D)$, and $\bm{c}=f(C)$.
Since $f$ is an isometric embedding,
\begin{equation*}
    d_{\mathbb{H}^d}(\bm{a},\bm{b})=d_{\mathbb{H}^d}(\bm{b},\bm{c}) = d_{\mathbb{H}^d}(\bm{a},\bm{d})=d_{\mathbb{H}^d}(\bm{d},\bm{c})=1
\end{equation*}
and
\begin{equation*}
    d_{\mathbb{H}^d}(\bm{a},\bm{b})+d_{\mathbb{H}^d}(\bm{b},\bm{c}) = d_{\mathbb{H}^d}(\bm{a},\bm{d})+d_{\mathbb{H}^d}(\bm{d},\bm{c})=d_{\mathbb{H}^d}(\bm{a},\bm{c})=2.
\end{equation*}
In a hyperbolic space, a geodesic segment is unique, and its midpoint is unique, so both $\bm{b}$ and $\bm{d}$ are placed at the midpoint in the geodesic segment $[\bm{a},\bm{c}]$; hence $\bm{b}=\bm{d}$.
See Proposition I.4 in \citet{Bridson1999}.
This contradicts the assumption that $f$ is an isometric embedding (which is injective).

\subsection{Proposition \ref{prop:tree-prod-not-hyp} and its Proof}
\begin{proposition}[$\ell_1$-product of trees is not hyperbolic]\label{prop:tree-prod-not-hyp}
    Let $T_1,T_2$ be infinite metric trees with known bounds for maximum degree and minimum edge length.
    Their $\ell_1$-product metric space $(T_1\times T_2,d_1)$ is not $\delta$-hyperbolic for any finite $\delta$.
    Consequently, there is no $(\lambda,c)$-quasi-isometric embedding $(T_1\times T_2,d_1)\to\mathbb{H}^n$.
\end{proposition}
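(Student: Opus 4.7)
The plan is to exhibit arbitrarily fat geodesic triangles inside $(T_1\times T_2,\,d_{T_1}+d_{T_2})$ by isometrically planting a flat $\ell_1$-quadrant inside it, and then to transfer this non-hyperbolicity to any hyperbolic target via the Morse (stability-of-quasi-geodesics) lemma.

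First, since each $T_i$ is infinite and, by the bounds on maximum degree and minimum edge length, locally finite, K\"onig's lemma supplies a geodesic ray $\gamma_i:[0,\infty)\to T_i$. The map $(s,t)\mapsto(\gamma_1(s),\gamma_2(t))$ is then an isometric embedding of $(\mathbb{R}_{\geq 0}^2,\|\cdot\|_1)$ into $T_1\times T_2$, so it suffices to find fat geodesic triangles inside this flat $\ell_1$-quadrant. For any $R>0$ I would take vertices $\bm{a}=(0,0)$, $\bm{b}=(R,0)$, $\bm{c}=(0,R)$ and use the axis-aligned geodesics $[\bm{a},\bm{b}]$ and $[\bm{a},\bm{c}]$ together with the L-shaped geodesic $[\bm{b},\bm{c}]$ that passes through the corner $(R,R)$; each such path is a concatenation of genuine tree geodesics whose total length equals the $\ell_1$-distance between its endpoints, so each is a bona fide geodesic in the product. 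A direct computation shows that $(R,R)$ lies at $\ell_1$-distance exactly $R$ from $[\bm{a},\bm{b}]\cup[\bm{a},\bm{c}]$, so the triangle is not $\delta$-slim for any $\delta<R$. Letting $R\to\infty$ yields the first claim.

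For the consequence, I would first note that $(T_1\times T_2,\,d_{T_1}+d_{T_2})$ is itself a geodesic metric space (concatenate coordinate-wise tree geodesics), and then invoke the standard fact that any geodesic space admitting a $(\lambda,c)$-quasi-isometric embedding into a $\delta_0$-hyperbolic geodesic space must itself be $\delta'$-hyperbolic for some $\delta'=\delta'(\lambda,c,\delta_0)$. The argument is a Morse-lemma sandwich: push a geodesic of $T_1\times T_2$ into $\mathbb{H}^n$ as a $(\lambda,c)$-quasi-geodesic, fellow-travel it with an actual hyperbolic geodesic at uniform distance $K=K(\lambda,c,\delta_0)$, apply $\delta_0$-slimness of the true hyperbolic triangle, and finally pull the resulting inequality back through the quasi-isometric embedding. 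Combined with the previous paragraph, this contradicts the existence of such an embedding.

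The main obstacle is precisely the uniformity in the Morse step: one must make sure that the fellow-traveling constant $K$ does not depend on the triangle, so that the pulled-back slimness constant $\delta'$ is a finite bound valid for \emph{all} geodesic triangles in $T_1\times T_2$. The cleanest route is to cite the packaged lemma ``a geodesic space that quasi-isometrically embeds into a hyperbolic geodesic space is itself hyperbolic'' (a standard consequence of Morse stability) rather than re-derive the three-step chase by hand.
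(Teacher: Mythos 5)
Your proposal is correct and takes essentially the same route as the paper: both arguments isometrically plant a flat $\ell_1$-quadrant spanned by two geodesic rays inside $T_1\times T_2$, exhibit arbitrarily fat geodesic triangles there (you use the corner $(R,R)$ of an L-shaped geodesic where the paper uses the midpoint $(m/2,m/2)$ of a monotone staircase geodesic, both at distance tending to infinity from the union of the other two sides), and then rule out any $(\lambda,c)$-quasi-isometric embedding into $\mathbb{H}^n$ by the stability of quasi-geodesics, which the paper packages as Lemma~\ref{lem:stable-geodesic-triangle}. The only cosmetic differences are your explicit appeal to K\"onig's lemma for the existence of the rays and your use of the continuous quadrant $(\mathbb{R}_{\ge 0}^2,\|\cdot\|_1)$ in place of the paper's $(\mathbb{N}^2,\|\cdot\|_1)$.
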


A quasi-geodesic $q$ in $X$ is a $(\lambda,c)$-quasi-isometric embedding $q:I\to X$, where $I$ is an interval in $\mathbb{R}$ or the intersection of $\mathbb{Z}$ with such an interval; see Definition I.8.22 in \citet{Bridson1999}.
In a $\delta$-hyperbolic space $Y$, the stability of quasi-geodesics asserts that the Hausdorff distance between a geodesic $\gamma$ and a $(\lambda,c)$-quasi-geodesic $q$ with common endpoints is bounded by a constant $D=D(\lambda,c,\delta)$; see Theorem III.1.7 in \citet{Bridson1999}.

\begin{lemma}[Stability of geodesic triangles under quasi-isometric embeddings]\label{lem:stable-geodesic-triangle}
    Let $X$ be a geodesic metric space and $f:X\to Y$ be a $(\lambda,c)$-quasi-isometric embedding into a $\delta$-hyperbolic space $Y$.
    Then every geodesic triangle in $X$ is $\tilde\delta$-slim for some constant $\tilde\delta\le\lambda(\delta+2D+c)$, where $D=D(\lambda,c,\delta)$ is the quasi-geodesic stability constant in $Y$.
\end{lemma}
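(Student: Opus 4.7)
The plan is to pull back the $\delta$-slim condition from $Y$ to $X$ via the quasi-isometric embedding $f$, using the stability of quasi-geodesics as a bridge between the image paths and genuine geodesics in $Y$. First I would fix a geodesic triangle in $X$ with vertices $x_1,x_2,x_3$ and geodesic sides $\gamma_{ij}:[0,L_{ij}]\to X$. Applying $f$ produces three paths $f\circ\gamma_{ij}$ in $Y$; because $\gamma_{ij}$ is a genuine geodesic and $f$ is a $(\lambda,c)$-quasi-isometric embedding, each $f\circ\gamma_{ij}$ is a $(\lambda,c)$-quasi-geodesic with endpoints $f(x_i),f(x_j)$. In parallel I would pick actual geodesics $\gamma'_{ij}$ in $Y$ with the same endpoints, forming a genuine geodesic triangle in $Y$ that is $\delta$-slim by hypothesis.

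Next I would invoke the stability of quasi-geodesics (Theorem III.1.7 of \citet{Bridson1999}): the Hausdorff distance between $f\circ\gamma_{ij}$ and $\gamma'_{ij}$ is at most $D=D(\lambda,c,\delta)$. Given an arbitrary point $p$ on, say, side $\gamma_{12}$ in $X$, I would chase $f(p)$ through the following three-step comparison in $Y$: (i) find a point $r_1\in\gamma'_{12}$ with $d_Y(f(p),r_1)\le D$ by stability; (ii) use $\delta$-slimness of the geodesic triangle in $Y$ to locate $r_2\in\gamma'_{13}\cup\gamma'_{23}$ with $d_Y(r_1,r_2)\le \delta$; (iii) apply stability again to pull $r_2$ back to a point $f(q)$ on the corresponding image $f\circ\gamma_{13}\cup f\circ\gamma_{23}$ with $d_Y(r_2,f(q))\le D$, where $q$ lies on $\gamma_{13}\cup\gamma_{23}$ in $X$. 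The triangle inequality then yields $d_Y(f(p),f(q))\le \delta+2D$.

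Finally I would transfer this back to $X$ using the lower bound of the quasi-isometric inequality,
\begin{equation*}
\tfrac{1}{\lambda}d_X(p,q)-c\ \le\ d_Y(f(p),f(q))\ \le\ \delta+2D,
\end{equation*}
which rearranges to $d_X(p,q)\le\lambda(\delta+2D+c)$. Since $p$ was an arbitrary point on an arbitrary side of the triangle in $X$, this establishes $\tilde\delta$-slimness with $\tilde\delta=\lambda(\delta+2D+c)$, as claimed.

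The main obstacle I anticipate is bookkeeping rather than any deep idea: one must be careful that the Hausdorff-close point in step (iii) genuinely lies in the image $f(\gamma_{13}\cup\gamma_{23})$ so that $q\in X$ can be named, and one must also handle the edge case where a quasi-geodesic $f\circ\gamma_{ij}$ is not a continuous parametrization but only a map between the relevant domains — here the fact that $\gamma_{ij}$ is a genuine geodesic and $f$ acts pointwise makes every image point automatically of the form $f(\cdot)$, so this subtlety is benign. Assembling the three distance estimates cleanly and accounting for the additive error $c$ only once (at the final pullback) is what produces the exact stated bound.
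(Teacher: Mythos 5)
Your proposal is correct and follows essentially the same route as the paper's proof: map each side to a $(\lambda,c)$-quasi-geodesic, use quasi-geodesic stability to compare with a genuine $\delta$-slim comparison triangle in $Y$ (yielding the $\delta+2D$ bound), and pull back through the lower quasi-isometry inequality to obtain $\tilde\delta\le\lambda(\delta+2D+c)$. Your version simply spells out the point-chasing that the paper's terse proof leaves implicit.
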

\begin{proof}
    Let $\Delta$ be a geodesic triangle in $X$.
    Each side maps to a $(\lambda,c)$-quasi-geodesic in $Y$.
    By the stability of quasi-geodesics, each image side is contained in $D$-neighborhood of the corresponding geodesic.
    Geodesic triangles in $Y$ are $\delta$-slim; hence, each point on one image side is contained in $\delta+2D$-neighborhood of the union of the other two image sides.
    Pulling this back via the quasi-isometry inequalities yields the stated bound.
\end{proof}

Let $T_1,T_2$ be infinite trees with bounds for maximum degree and minimum edge length, which admit a geodesic ray of infinite length.
For simplicity, we restrict the edge length to be 1, but the following discussion holds for arbitrary non-zero edge lengths, by replacing $\mathbb{N}$ with the ordered set of the geodesic distances from the root to the nodes in the geodesic ray.

Consider $(\mathbb{N}^2,\|\cdot\|_1)$.
Let $m$ be an even integer and take three points $A=(0,0)$, $B=(m,0)$, $C=(0,m)$.
The midpoint $(\frac{m}{2},\frac{m}{2})$ of a monotone geodesic from $B$ to $C$ is at $\frac{m}{2}$ from $[A,B]\cup[A,C]$, requiring $\delta\ge m/2$.
$\delta\to\infty$ as $m\to\infty$.
Hence, $(\mathbb{N}^2,\|\cdot\|_1)$ is not $\delta$-hyperbolic for any finite $\delta$.

Choose two geodesic rays $\gamma_i:\mathbb{N}\to T_i$ for $i=1,2$.
The map $\Phi:\mathbb{N}^2\to T_1\times T_2$, $\Phi(m,n)=(\gamma_1(m),\gamma_2(n))$ is an isometric embedding from $(\mathbb{N}^2,\|\cdot\|_1)$ into $(T_1\times T_2,d_1)$.
Given a $\tilde\delta$-slim geodesic triangle $\Delta$ in $\mathbb{N}^2$, its image $\Phi(\Delta)$ is also a $\tilde\delta$-slim geodesic triangle in $T_1\times T_2$.
Since $(\mathbb{N}^2,\|\cdot\|_1)$ is not $\delta$-hyperbolic, neither is $(T_1\times T_2,d_1)$.

Assume by contradiction that $f:(T_1\times T_2,d_1)\to\mathbb{H}^n$ is a $(\lambda,c)$-quasi-isometric embedding, where $\mathbb{H}^n$ is $\delta$-hyperbolic for a finite $\delta$.
By Lemma~\ref{lem:stable-geodesic-triangle}, every geodesic triangle in $T_1\times T_2$ is $\tilde\delta$-slim, where $\tilde\delta\le\lambda(\delta+2D+c)$ and $D=D(\lambda,c,\delta)$ are constants.
However, $(T_1\times T_2,d_1)$ is not $\tilde\delta$-hyperbolic for any finite $\tilde\delta$, which contradicts the assumption.
Therefore, there is no $(\lambda,c)$-quasi-isometric embedding $f:(T_1\times T_2,d_1)\to\mathbb{H}^n$.

\subsection{Proof of Theorem~\ref{thm:tree-to-Hprod}}
\begin{lemma}[Product of quasi-isometric embeddings]\label{lem:prod-qi}
    If $f_i:(X_i,d_{X_i})\to(Y_i,d_{Y_i})$ are $(\lambda_i,c_i)$-quasi-isometric embeddings, then
    \begin{equation}\textstyle
        f=\prod_{i=1}^k f_i:\ \Bigl(\prod_{i=1}^k X_i,\ \sum_{i=1}^k d_{X_i}\Bigr)\ \longrightarrow\ \Bigl(\prod_{i=1}^k Y_i,\ \sum_{i=1}^k d_{Y_i}\Bigr)
    \end{equation}
    is $(\lambda,c)$-quasi-isometric with $\lambda=\max_i\lambda_i$ and $c=\sum_i c_i$.
\end{lemma}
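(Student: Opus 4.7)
The plan is to verify the two quasi-isometry inequalities and injectivity coordinate-by-coordinate, exploiting the fact that the $\ell_1$-product metric is literally the sum of the factor metrics, so summing the per-factor bounds produces the required global bound with no cross terms.

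First I would fix $\bm{x}=(x^{(1)},\dots,x^{(k)})$ and $\bm{x}'=(x'^{(1)},\dots,x'^{(k)})$ in $\prod_i X_i$ and apply the hypothesis to each factor. For the upper bound,
\begin{equation*}
    \sum_{i=1}^k d_{Y_i}\bigl(f_i(x^{(i)}),f_i(x'^{(i)})\bigr)\ \le\ \sum_{i=1}^k\bigl(\lambda_i\,d_{X_i}(x^{(i)},x'^{(i)})+c_i\bigr)\ \le\ \lambda\sum_{i=1}^k d_{X_i}(x^{(i)},x'^{(i)})+c,
\end{equation*}
using $\lambda_i\le\lambda=\max_i\lambda_i$ and $\sum_i c_i=c$. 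For the lower bound, I would apply the other half of each quasi-isometry inequality and use $1/\lambda_i\ge 1/\lambda$:
\begin{equation*}
    \sum_{i=1}^k d_{Y_i}\bigl(f_i(x^{(i)}),f_i(x'^{(i)})\bigr)\ \ge\ \sum_{i=1}^k\Bigl(\tfrac{1}{\lambda_i}d_{X_i}(x^{(i)},x'^{(i)})-c_i\Bigr)\ \ge\ \tfrac{1}{\lambda}\sum_{i=1}^k d_{X_i}(x^{(i)},x'^{(i)})-c.
\end{equation*}
These two estimates are exactly the $(\lambda,c)$-quasi-isometric embedding inequalities for $f$ with the claimed constants.

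To complete the argument I would verify injectivity of $f$: if $f(\bm{x})=f(\bm{x}')$, then $f_i(x^{(i)})=f_i(x'^{(i)})$ for each $i$, and since each $f_i$ is an injective quasi-isometric embedding, $x^{(i)}=x'^{(i)}$ for all $i$, so $\bm{x}=\bm{x}'$. The proof is essentially routine; no step is really an obstacle, but the only place requiring care is that one must uniformize the per-factor multiplicative constants on both sides by taking the maximum, which controls the upper bound directly and the lower bound via the reciprocal inequality $1/\lambda_i\ge 1/\lambda$; meanwhile the additive errors aggregate additively, giving $c=\sum_i c_i$.
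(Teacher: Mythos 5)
Your proof is correct and follows exactly the route the paper intends: sum the per-factor quasi-isometry inequalities, uniformize the multiplicative constants by $\lambda=\max_i\lambda_i$ on both sides, and let the additive errors accumulate to $c=\sum_i c_i$. The injectivity check is a nice bit of added rigor the paper leaves implicit, but otherwise this is the same one-line argument written out in full.
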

\begin{proof}[Proof of Lemma~\ref{lem:prod-qi}]
    Sum the index-wise inequalities and bound $\lambda$ by $\max_i\lambda_i$.
\end{proof}
Theorem~\ref{thm:tree-to-Hprod} follows immediately from Theorem~\ref{thm:tree-to-hyp} and Lemma~\ref{lem:prod-qi}.

\section{Implementation Details}\label{appendix:implementations}
\subsection{Lorentz Model of Hyperbolic Space}
Let $\mathbb{R}^{d,1}$ be the $(d+1)$-dimensional Minkowski space, equipped with the Minkowski metric $g_{\mathbb{R}^{d,1}}=-\mathrm{d}x_0^2+\mathrm{d}x_1^2+\cdots+\mathrm{d}x_d^2$ in coordinates $\hat{\bm{x}}=(x_0,x_1,\dots,x_d)$.
Intuitively, $x_0$ denotes the time coordinate, and the others $\bm{x}=(x_1,\dots,x_d)\in\mathbb{R}^d$ denote the space coordinates.
The inner product in $\mathbb{R}^{d,1}$ is given by
\begin{equation}\textstyle
    \langle \hat{\bm{x}},\hat{\bm{y}}\rangle_{\mathbb{R}^{d,1}}=-x_0y_0+\langle \bm{x},\bm{y}\rangle_{\mathbb{R}^d}.
\end{equation}

For $\alpha>0$, define the upper sheet of the two-sheeted hyperboloid as $\mathbb{L}_\alpha^d=\{\hat{\bm{x}}\in\mathbb{R}^{d,1}\ \mid\langle \hat{\bm{x}},\hat{\bm{x}}\rangle_{\mathbb{R}^{d,1}}=-\alpha^{-1},\ x_0>0\}.$
Equivalently, every point satisfies $x_0 = \sqrt{\alpha^{-1} + \|\bm{x}\|_{\mathbb{R}^d}^2}$.
The Riemannian metric on $\mathbb{L}_\alpha^d$ is the restriction of the Minkowski metric $g_{\mathbb{R}^{d,1}}$ to $T\mathbb{L}_\alpha^d$; with this metric, the sectional curvature is the constant $-\alpha$ \citep{Cannon1997, Lee2018}.
The geodesic distance is
\begin{equation}
    d_{\mathbb{L}_\alpha^d}(\hat{\bm{x}},\hat{\bm{y}})
    =\alpha^{-1/2}\operatorname{arccosh}(-\alpha\,\langle \hat{\bm{x}},\hat{\bm{y}}\rangle_{\mathbb{R}^{d,1}})
    \quad\text{for }\hat{\bm{x}},\hat{\bm{y}}\in\mathbb{L}_\alpha^d.
\end{equation}
Then, a $d$-dimensional hyperbolic space $\mathbb{H}^d_\alpha$ with a curvature $-\alpha$ is isometrically embedded into $\mathbb{L}_\alpha^d$ by
\begin{equation}\textstyle
    \iota:\mathbb{H}^d_\alpha\to\mathbb{L}_\alpha^d,\bm{x}\mapsto\hat{\bm{x}}=(\sqrt{\alpha^{-1}+\|\bm{x}\|_{\mathbb{R}^d}^2},\bm{x}),
\end{equation}
and we denote $\langle\bm{x},\bm{y}\rangle_{\mathbb{H}^d_\alpha}=\langle\hat{\bm{x}},\hat{\bm{y}}\rangle_{\mathbb{L}_\alpha^d}$ and $d_{\mathbb{H}^d_\alpha}(\bm{x},\bm{y})=d_{\mathbb{L}_\alpha^d}(\hat{\bm{x}},\hat{\bm{y}})$ in the main body.

When feature extractors (such as encoders) operate in the Euclidean space $\mathbb{R}^d$, their output cannot be treated directly as an embedding $\bm{x}$ in a hyperbolic space due to the mismatch in geometry.
Instead, the output $\bm{v}=(v_1,\dots,v_d)$ is treated as a tangent vector in the tangent space $T_{\hat{\boldsymbol{o}}}\mathbb{L}_\alpha^d\simeq\mathbb{R}^d$ at the base point $\hat{\boldsymbol{o}}=(\alpha^{-1/2},0,\dots,0)$ of $\mathbb{L}_\alpha^d$ and mapped to a point in $\mathbb{L}_\alpha^d$ via the exponential map
\begin{equation}\textstyle
    \operatorname{exp}_{\hat{\boldsymbol{o}}}^\alpha:T_{\hat{\boldsymbol{o}}}\mathbb{L}_\alpha^d\to\mathbb{L}_\alpha^d,\bm{v}\mapsto\hat{\bm{x}}=\operatorname{exp}_{\hat{\boldsymbol{o}}}^\alpha(\bm{v})=\cosh(\sqrt{\alpha}\|\bm{v}\|_{\mathbb{R}^d})\hat{\boldsymbol{o}} + \frac{\sinh(\sqrt{\alpha}\|\bm{v}\|_{\mathbb{R}^d})}{\sqrt{\alpha}\|\bm{v}\|_{\mathbb{R}^d}}\bm{v}.
\end{equation}

\subsection{Hyperbolic Entailment Cones in the Lorentz Model}
Hyperbolic entailment cones capture the hierarchical relationships~\citep{Ganea2018}.
For every point $\bm{y}$ in each hyperbolic factor $\mathbb{H}^d$, we define a geodesic conical region $C(\bm{y})$, where all points $\bm{x} \in C(\bm{y})$ are considered more specific than $\bm{y}$ (i.e., $\bm{x}\preceq\bm{y}$).
The size of this conical region is determined by its half-aperture $\omega(\bm{y})$, which is inversely proportional to the norm:
\begin{equation}\label{eq:aperture}
    \omega(\bm{y}) = \sin^{-1} \left( \min\left\{1, \frac{2K}{\sqrt{\alpha} \| \bm{y} \|_{\mathbb{R}^d}}\right\}\right),
\end{equation}
where $K$ is set to $0.1$.
Then, $\bm{x} \in C(\bm{y})$ iff $\phi(\bm{x},\bm{y}) < \omega(\bm{y})$ for the exterior angle
\begin{equation}
    \phi(\bm{x},\bm{y}) = \cos^{-1} \left( \frac{x_{0} + y_{0} \alpha \langle \bm{x},\bm{y} \rangle_{\mathbb{H}^d_\alpha}}{\| \bm{y} \|_{\mathbb{R}^d} \sqrt{(\alpha \langle \bm{x},\bm{y} \rangle_{\mathbb{H}^d_\alpha})^2-1}} \right)
\end{equation}

\subsection{Model Architecture and Hyperparameters}
We introduce the details of our implementation and hyperparameters, which follow \citet{Desai2023,Pal2025} unless specified otherwise.

As an image encoder, we employ the Vision Transformer \citep{Dosovitskiy2021,Chen2021,Touvron2021} with a patch size of 16.
Each image is randomly resized by a scale from 0.5 to 1.0 and randomly cropped to $224 \times 224$ pixels, resulting in 196 tokens, concatenated with 2-D sine--cosine position embeddings.
We employ the text encoder used by the original CLIP \citep{Radford2021}, which consists of a 12-layer Transformer architecture~\citep{Vaswani2017} with embeddings of 512 dimensions.

The outputs of image and text encoders are scaled by learnable scalars $c_{\text{img}}$ and $c_{\text{txt}}$, respectively, before being mapped by the exponential map.
These scalars are initialized to $c_{\text{img}} = c_{\text{txt}} = 1/\sqrt{512}$.
The negative curvature $\alpha_i$ for factor $i$ is initialized at $1.0$ and clamped in $[0.1,10.0]$.
For the contrastive loss $\mathcal{L}_{\text{cont}}$ in Eq.~\eqref{eq:contrastive_loss}, the temperature $\tau$ is initialized to $0.07$ and clipped at a minimum value of $0.01$.
For the entailment loss $\mathcal{L}_{\text{ent}}$ in Eq.~\eqref{eq:entailment_loss}, the hyperparameter $\eta$ is set to $\eta = 0.7$ for inter-modality entailments ($I\preceq T$ and $I^{\text{box}}\preceq T^{\text{box}}$) and $\eta = 1.2$ for intra-modality entailments ($T\preceq I^{\text{box}}$ and $T\preceq T^{\text{box}}$).
These scalars are learned on a logarithmic scale.

The hyperparameter $\gamma$ for the overall loss in Eq.~\eqref{eq:overall_loss} is set to $\gamma = 0.2$.
We trained each model on 4 A100 GPUs for 500,000 iterations with a batch size of 768.
For the large Vision Transformer, we used 8 A100 GPUs.
We used the AdamW optimizer~\citep{Loshchilov2019} with hyperparameters $\beta_1 = 0.9, \beta_2 = 0.98$.
We applied weight decay of $0.2$ to model parameters but not to scalar parameters.
We used a cosine learning rate scheduler~\citep{Loshchilov2017} with a maximum learning rate of $5\times10^{-4}$ and a warm-up of 4,000 steps.

We found that the learned curvatures differ across factors and sometimes reach the upper bound 10.0 or the lower bound 0.1, but we did not observe a consistent pattern across factors.

\subsection{Benchmarks}
\label{app:scene_understanding_benchmarks}

\paragraph{Zero-shot Image Classification.}
We follow the protocol in \citet{Desai2023}.
Each class is accompanied by a set of short text templates, such as ``a photo of a \{class name\}''.
The prediction is made by selecting the class whose text templates are closest on average to the image in the embedding space.
We summarize the datasets below.
\begin{itemize}[leftmargin=1.2em]
    \item \textbf{ImageNet}~\citep{Russakovsky2015}: A large-scale dataset of diverse, everyday object categories.
    \item \textbf{Food-101}~\citep{Bossard2014}: A fine-grained dataset of 101 different types of food dishes.
    \item \textbf{CIFAR-10}~\citep{Krizhevsky2009}: A dataset of low-resolution natural images across 10 general object classes.
    \item \textbf{CIFAR-100}~\citep{Krizhevsky2009}: Similar to CIFAR-10, but with 100 fine-grained object classes.
    \item \textbf{CUB-2011}~\citep{Wah2011}: A fine-grained dataset for the identification of 200 bird species.
    \item \textbf{SUN397}~\citep{Xiao2010}: A large-scale scene recognition dataset with 397 scene categories.
    \item \textbf{Stanford Cars}~\citep{Krause2013}: A fine-grained dataset of cars, annotated with make, model, and year.
    \item \textbf{FGVC-Aircraft}~\citep{Maji2013}: A fine-grained dataset for aircraft model recognition.
    \item \textbf{DTD}~\citep{Cimpoi2014}: The Describable Textures Dataset for texture recognition.
    \item \textbf{Oxford-IIIT Pets}~\citep{Parkhi2012}: A fine-grained dataset of 37 different pet breeds.
    \item \textbf{Caltech-101}~\citep{Li2004}: One of the classic object recognition datasets with 101 categories.
    \item \textbf{Flowers-102}~\citep{Nilsback2008}: A fine-grained dataset for the classification of 102 flower categories.
    \item \textbf{STL-10}~\citep{Coates2011}: An image recognition dataset inspired by CIFAR-10, but with higher resolution.
    \item \textbf{EuroSAT}~\citep{Helber2019}: A dataset of Sentinel-2 satellite images for land use and land cover classification.
    \item \textbf{RESISC45}~\citep{Cheng2017}: A benchmark for Remote Sensing Image Scene Classification (RESISC).
    \item \textbf{Country211}~\citep{Radford2021}: A dataset for predicting the country of origin from photographs.
\end{itemize}

\paragraph{Zero-shot Image and Text Retrieval.}
In text-to-image retrieval, given a text query, the model retrieves the nearest images in the embedding space, and vice versa in image-to-text retrieval.
Please refer to the detailed protocol in \citet{Desai2023}.
We summarize the datasets used as follows.
\begin{itemize}[leftmargin=1.2em]
    \item \textbf{COCO}~\citep{Lin2014}: A large-scale dataset of complex everyday scenes with rich annotations.
    \item \textbf{Flickr30K}~\citep{Young2014,Karpathy2015}: A dataset of images from the Flickr website, each paired with five descriptive captions.
\end{itemize}

\paragraph{Hierarchical Classification.}
This task was introduced in~\citet{Russakovsky2015}, and we used the implementation in~\citet{Pal2025}.
The class labels are enriched by WordNet~\citep{Miller1995}, and the embeddings of class labels are obtained in the same way as the zero-shot image classification task.
Errors between predicted and true classes are measured using the WordNet graph with unit-length edges.
Tree Induced Error (TIE) is the distance between the nodes corresponding to predicted and true classes.
Lowest Common Ancestor (LCA) error is the maximum of the distances from predicted and true classes to their LCA.
Jaccard similarity $J$, hierarchical precision $P_H$, and hierarchical recall $R_H$ are similarities between the sets of ancestors of predicted and true classes.
Intuitively, hierarchical precision $P_H$ quantifies correctness under over-generalization: it takes value $1$ if the predicted label is the ground truth or one of its ancestors in the taxonomy.
Conversely, hierarchical recall $R_H$ quantifies correctness under over-specialization: it takes value $1$ if the predicted label is the ground truth or one of its descendants.

\paragraph{Compositional Understanding.}
Samples in typical multi-modal datasets are diverse enough that there are few near-duplicate image--text pairs; consequently, models insensitive to detailed semantics can still perform well on retrieval tasks.
To assess whether a model truly understands the compositionality of words in a caption, hard negative captions are generated, which are almost correct but differ in a small, targeted way to evaluate whether models can select the true caption.

In VL-CheckList--Object, nouns in the caption are replaced.
Because the difficulty varies with the replaced object's location (center/mid/margin) and size (small/medium/large) in image, results are reported separately for each subset.

In SugarCrepe, three operations (replace, swap, and add) are applied to objects, attributes, and relations.
\emph{Replace-Obj} is similar to VL-CheckList--Object.
\emph{Swap} exchanges roles or pairings.
In \textit{Swap-Obj}, the model must correctly resolve agent--action combinations.
\emph{Add} introduces nouns or adjectives that were absent from the original caption.

\section{Additional Results and Visualizations}\label{appendix:additional-results}

\begin{table}[t]
    \centering
    \scriptsize
    \caption{Results with different model sizes.}
    \label{tab:results_appendix}
    \setlength{\tabcolsep}{2pt}
    \begin{tabular}{ccc ccccc cccccc}
                           &                                      & \multirow{3}{*}{\rotatebox{90}{w/ boxes\ }} & \multicolumn{5}{c}{\cellcolor{theme3Light}Hierarchical Classification} & \multicolumn{6}{c}{\cellcolor{theme1Light}VL-CheckList--Object}                                                                                                                                                                                                                                                                                                                                    \\
        \cmidrule(lr){4-8}\cmidrule(lr){9-14}
                           &                                      &                                             & \multicolumn{5}{c}{\cellcolor{theme3Light}WordNet}                     & \multicolumn{3}{c}{\cellcolor{theme1Light}Location}             & \multicolumn{3}{c}{\cellcolor{theme1Light}Size}                                                                                                                                                                                                                                                                                  \\
        \cmidrule(lr){4-8}\cmidrule(lr){9-11} \cmidrule(lr){12-14}
                           &                                      &                                             & \cellcolor{theme3Light}TIE($\downarrow$)                               & \cellcolor{theme3Light}LCA($\downarrow$)                        & \cellcolor{theme3Light}$J$($\uparrow$)          & \cellcolor{theme3Light}$P_H$($\uparrow$) & \cellcolor{theme3Light}$R_H$($\uparrow$) & \cellcolor{theme1Light}Center & \cellcolor{theme1Light}Mid & \cellcolor{theme1Light}Margin & \cellcolor{theme1Light}Large & \cellcolor{theme1Light}Medium & \cellcolor{theme1Light}Small \\

        \midrule
        \cellcolor{theme4} & \multirow{1}{*}{CLIP}                &                                       & 4.126                                                    & 2.445                      & 0.7530                    & 0.8289                      & 0.8314                                         & 61.83                                                 & 62.23                                 & 60.33               & 63.30              & 61.23               & 59.07              \\
        \cellcolor{theme4} & \multirow{1}{*}{MERU}                &                                        & 4.189                                                    & 2.438                      & 0.7486                    & 0.8278                      & 0.8265                                       & 60.90                                                 & 59.20                                 & 58.00               & 62.20              & 60.00               & 58.70              \\
        \cellcolor{theme4} & \multirow{1}{*}{HyCoCLIP}            & \cmark                                  & \textbf{3.669}                                           & \textbf{2.231}             & \textbf{0.7818}           & \textbf{0.8514}             & \textbf{0.8507}                             & \underline{69.67}                                    & \underline{68.17}                    & \underline{67.67}  & \underline{72.33} & \underline{66.33}  & \underline{68.10} \\
        \cellcolor{theme4} \multirowcell{-4}{\textbf{ViT}
        \\
            \textbf{S/16}}
        \cellcolor{theme4} & \multirow{1}{*}{\textbf{\modelname}} & \cmark                                      & \underline{3.715}                                                      & \underline{2.241}                                               & \underline{0.7778}                              & \underline{0.8492}                       & \underline{0.8476}                        & \textbf{73.33}                                       & \textbf{71.37}                       & \textbf{72.00}     & \textbf{75.27}    & \textbf{68.23}     & \textbf{70.77}    \\
        \midrule
        \cellcolor{theme4} & \multirow{1}{*}{CLIP}                &                                             & 3.750                                                                  & 2.276                                                           & 0.7774                                          & 0.8471                                   & 0.8483                                        & 61.90                                                           & 60.30                                                 & 60.40                                              & 63.87                                           & 60.17                                          & 58.23             \\
        \cellcolor{theme4} & \multirow{1}{*}{CLIP}                & \cmark                                      & 3.736                                                                  & 2.279                                                           & 0.7784                                          & 0.8473                                   & 0.8501                                        & 61.93                                                           & 59.33                                                 & 60.83                                              & 63.70                                           & 60.80                                          & 58.07             \\
        \cellcolor{theme4} & \multirow{1}{*}{MERU}                &                                             & 3.815                                                                  & 2.294                                                           & 0.7733                                          & 0.8454                                   & 0.8450                                        & 61.27                                                           & 59.03                                                 & 58.97                                              & 63.97                                           & 57.70                                          & 56.07             \\
        \cellcolor{theme4} & \multirow{1}{*}{MERU}                & \cmark                                      & 3.802                                                                  & 2.289                                                           & 0.7740                                          & 0.8457                                   & 0.8455                                        & 61.03                                                           & 58.47                                                 & 58.73                                              & 62.63                                           & 58.70                                          & 56.47             \\
        \cellcolor{theme4} & \multirow{1}{*}{HyCoCLIP}            & \cmark                                      & \underline{3.319}                                                      & \underline{2.092}                                               & \underline{0.8043}                              & \underline{0.8676}                       & \underline{0.8661}                           & \underline{70.43}                                               & \underline{69.50}                                     & \underline{67.80}                                  & \underline{72.57}                               & \underline{66.13}                              & \underline{67.20} \\
        \cellcolor{theme4}\multirowcell{-6}{\textbf{ViT}
        \\
            \textbf{B/16}}
        \cellcolor{theme4} & \multirow{1}{*}{\textbf{\modelname}} & \cmark                                      & \textbf{3.294}                                                         & \textbf{2.083}                                                  & \textbf{0.8059}                                 & \textbf{0.8684}                          & \textbf{0.8672}                              & \textbf{71.20}                                                  & \textbf{70.30}                                        & \textbf{70.37}                                     & \textbf{73.73}                                  & \textbf{68.10}                                 & \textbf{67.83}    \\
        \midrule
        \cellcolor{theme4} & \multirow{1}{*}{CLIP}                &                                        & 3.480                                                    & 2.173                      & 0.7956                    & 0.8595                      & 0.8613                                         & 63.33                                                 & 60.63                                 & 59.87               & 64.57              & 59.50               & 58.03              \\
        \cellcolor{theme4} & \multirow{1}{*}{MERU}                &                                         & 3.571                                                    & 2.191                      & 0.7891                    & 0.8565                      & 0.8551                                         & 62.10                                                 & 58.00                                 & 57.23               & 63.93              & 57.83               & 54.57              \\
        \cellcolor{theme4} & \multirow{1}{*}{HyCoCLIP}            & \cmark                                 & \underline{3.113}                                        & \underline{2.012}          & \underline{0.8175}        & \underline{0.8769}          & \underline{0.8753}                              & \underline{73.03}                                    & \underline{69.83}                    & \textbf{70.23}     & \textbf{73.77}    & \underline{67.53}  & \underline{68.43} \\
        \cellcolor{theme4}\multirowcell{-4}{\textbf{ViT}                                                                                                                                                                                                                                                                                                                                                                                                                                                                                                                                      \\
            \textbf{L/16}}
        \cellcolor{theme4} & \multirow{1}{*}{\textbf{\modelname}} & \cmark                                 & \textbf{3.038}                                           & \textbf{1.991}             & \textbf{0.8226}           & \textbf{0.8797}             & \textbf{0.8793}                                 & \textbf{73.60}                                       & \textbf{70.90}                       & \underline{70.07}  & \underline{73.73} & \textbf{68.07}     & \textbf{69.27}    \\
        \midrule
    \end{tabular}\\
    Among methods with the same backbone, the best and second-best performances are emphasized by bold fonts and underlines, respectively.
\end{table}

\subsection{Additional Experimental Results}\label{appendix:additional-results-sub}

We summarized the results of single runs with the small and large Vision Transformers as the image encoder~\citep{Dosovitskiy2021,Chen2021,Touvron2021} in Table~\ref{tab:results_appendix}.
As the model size increases, the overall performance improves in most cases.
Nevertheless, \modelname remains the best or at least competitive across all evaluation metrics for hierarchy and compositionality.

\begin{figure}[t]

    \centering
    \begin{minipage}{0.6\textwidth}
    \footnotesize
        \centering
    \includegraphics[width=1.0\textwidth]{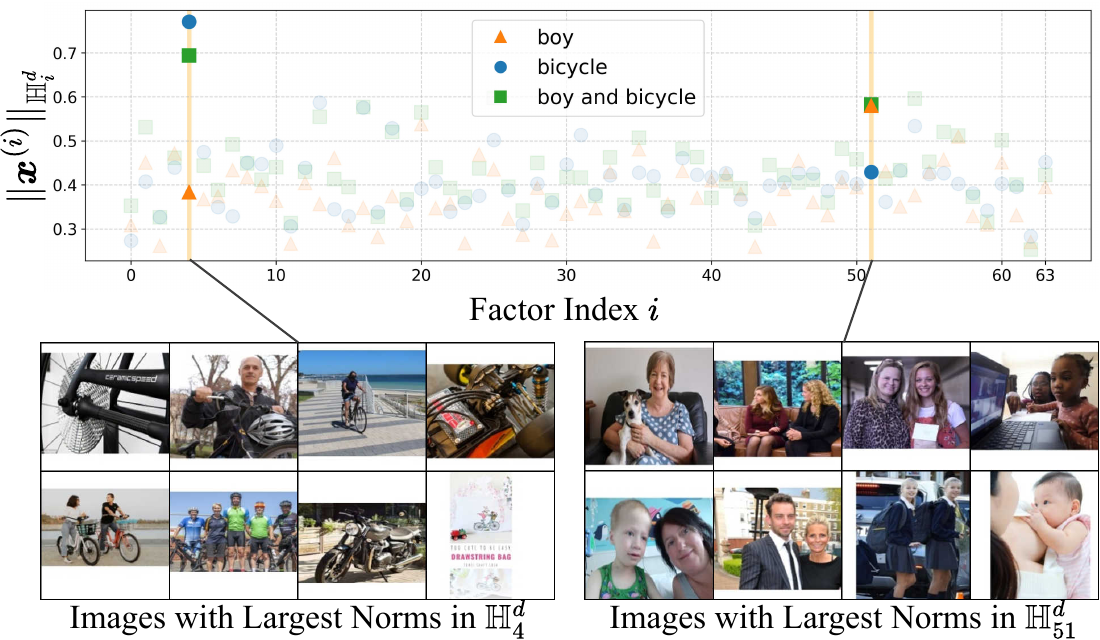}\\
        (a1) Embedding norms of single-concept and conjunctive prompts.\\
    \end{minipage}
    \hfill
    \begin{minipage}{0.38\textwidth}
    \footnotesize
        \centering
        \includegraphics[width=4.5cm]{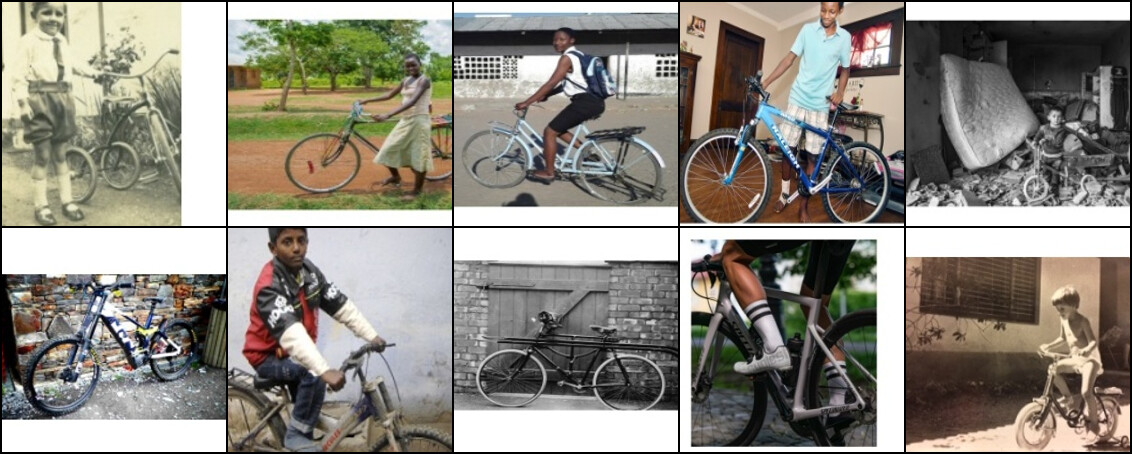}    \\
        (b1) Images retrieved by ``$\max$'' of \\ the single-concept prompts.\\[1mm]
        \includegraphics[width=4.5cm]{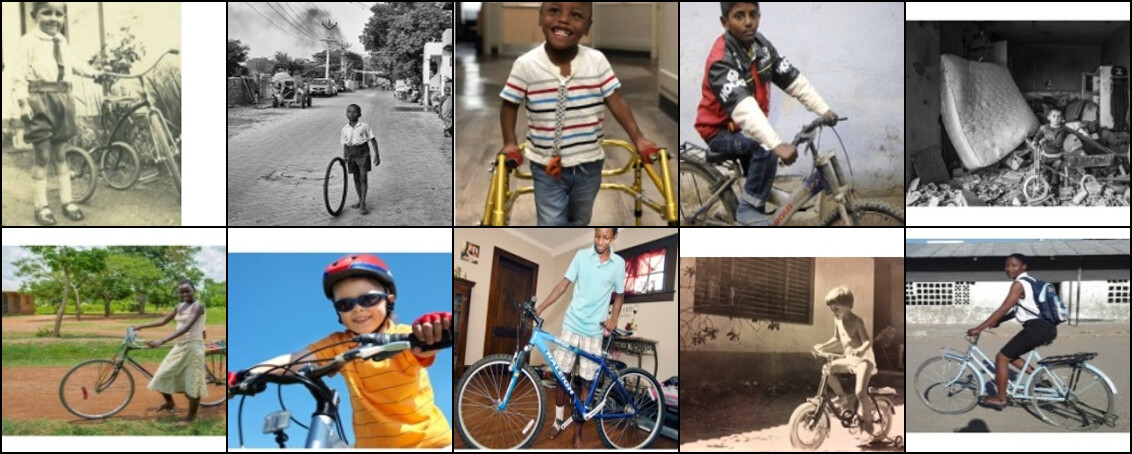}       \\
        (c1) Images retrieved by \\ the conjunctive prompt.\\
    \end{minipage}\\[3mm]
    \centering
    \begin{minipage}{0.6\textwidth}
    \footnotesize
        \centering
        \includegraphics[width=1.0\textwidth]{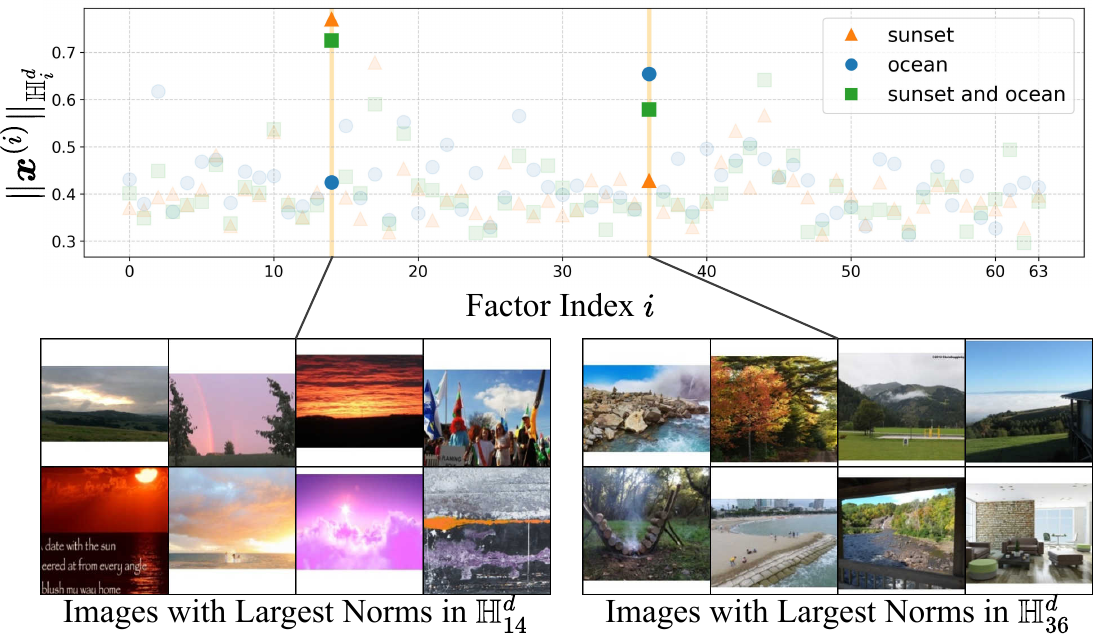}\\
        (a2) Embedding norms of single-concept and conjunctive prompts.\\
    \end{minipage}
    \hfill
    \begin{minipage}{0.38\textwidth}
    \footnotesize
        \centering
        \includegraphics[width=4.5cm]{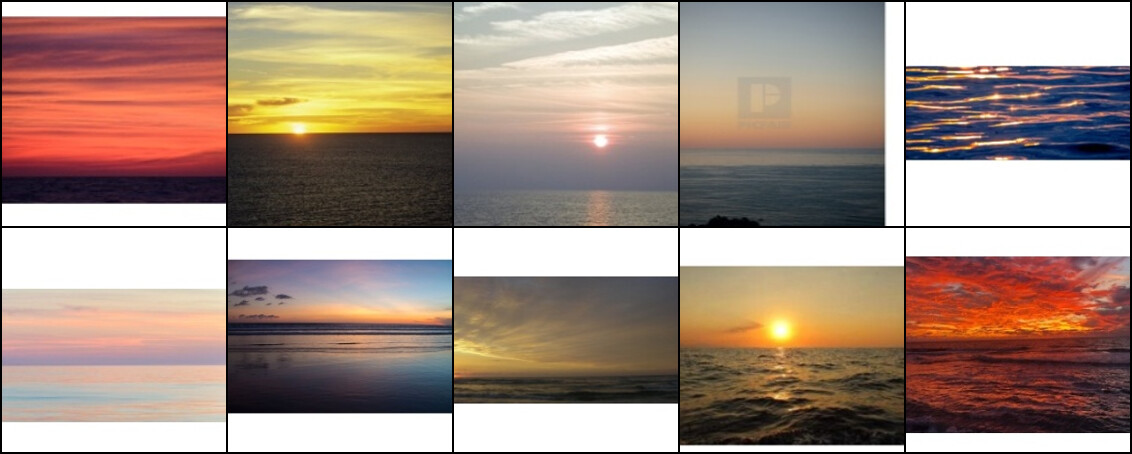}    \\
        (b2) Images retrieved by ``$\max$'' of \\ the single-concept prompts.\\[1mm]
        \includegraphics[width=4.5cm]{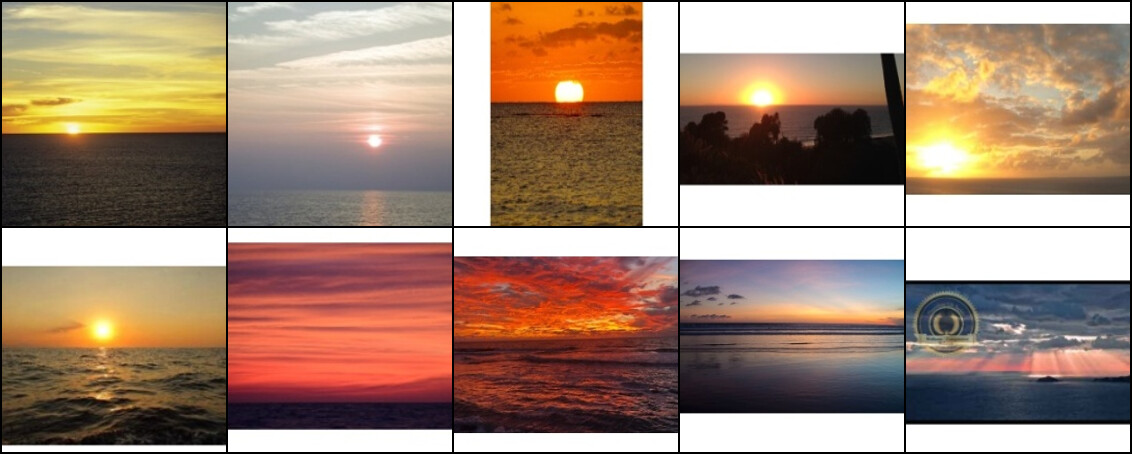}       \\
        (c2) Images retrieved by \\ the conjunctive prompt.\\
    \end{minipage}
    \caption{\textbf{Factor-wise embeddings and retrievals.}
        (a1)(a2) Single-concept prompts activate distinct factors, and their textual composition activates the corresponding factors simultaneously.
        (b1)--(c1), (b2)--(c2) ``$\max$'' of the single-concept prompts retrieves images similarly to the textual compositions.
        See also Fig.~\ref{fig:composition_ell1}
    }
    \label{fig:composition_ell1_appendix}
\end{figure}

\subsection{Additional Visualizations}\label{appendix:additional-visualizations}
In this section, we provide additional visualizations that complement Section~\ref{sec:visualization}.
We embed each word using the single-concept prompt, ``a photo of a \{word\}'' and the conjunctive prompt, which is the textual composition of two words, ``a photo of a \{word 1\} and a \{word 2\}.''

Figure~\ref{fig:composition_ell1_appendix} visualizes factor-wise embedding norms of single-concept and conjunctive prompts.
In (a1), the ``boy'' embedding activates factor~$i=51$, which is also activated by various human images, indicating that this factor captures humans; the ``bicycle'' embedding activates factor~$i=4$, associated with bicycles and wheels.
The conjunctive prompt ``boy and bicycle'' activates both factors~$i=4$ and $i=51$.
In (a2), the ``sunset'' embedding activates factor~$i=14$, which captures a family of skies, whereas the ``ocean'' embedding activates factor~$i=36$, which captures a family of natural landscapes.
The conjunctive prompt ``sunset and ocean'' activates both factors~$i=14$ and $i=36$.

The contrastive loss (InfoNCE loss) uses the distance summed over all factors, so it allows embeddings to gather at the origin in a factor if they are sufficiently distant from each other in another factor.
As shown in Eq.~\eqref{eq:aperture}, the aperture of the hyperbolic entailment cone increases as the distance from the origin decreases, up to a maximum of 180 degrees.
Consequently, near the origin, a child instance can be embedded anywhere within half of the space without incurring entailment loss.
These mechanisms allow instances to effectively ``turn off'' some hyperbolic factors.
We also considered a modification in which the cone aperture could reach 360 degrees to completely eliminate the penalty, but since the original setting already behaved well in our experiments, we decided not to modify this component.

Figure~\ref{fig:composition_ell1} (b)--(c), Figure~\ref{fig:composition_ell1_appendix} (b1)--(c1), and (b2)--(c2) show top-10 GRIT images retrieved using the factor-wise ``$\max$'' of single-concept prompts and conjunctive prompts.
Specifically, we embed two single-concept prompts (e.g., ``a photo of a dog'' and ``a photo of a car'') as $\bm{X}_a=(\bm{x}_a^{(1)},\dots,\bm{x}_a^{(k)})$ and $\bm{X}_b=(\bm{x}_b^{(1)},\dots,\bm{x}_b^{(k)})$, and then we construct a new embedding $\bm{X}_{\max\{a,b\}}$ by selecting, for each factor, the factor-wise embedding with the larger norm between two single-concept prompts, i.e., we take
\begin{equation*}
    \bm{X}_{\max\{a,b\}}=(\bm{x}_{\max\{a,b\}}^{(1)},\dots,\bm{x}_{\max\{a,b\}}^{(k)}) \text{ with }\bm{x}_{\max\{a,b\}}^{(i)}=\underset{\bm{x}\in\{\bm{x}_a^{(i)},\bm{x}_b^{(i)}\}}{\arg\max}\|\bm{x}\|_{\mathbb{H}^d_i} \text{ for } i=1,\dots,k.
\end{equation*}
Then, the factor-wise norms satisfy $\|\bm{x}_{\max\{a,b\}}^{(i)}\|_{\mathbb{H}^d_i}=\max\{\|\bm{x}_a^{(i)}\|_{\mathbb{H}^d_i},\|\bm{x}_b^{(i)}\|_{\mathbb{H}^d_i}\}$.
If each factor were a bit $\{0,1\}$, this operation would reduce to the union operation or the logical $\operatorname{OR}$ for a Boolean algebra.
If each factor were a real number $\mathbb{R}$, it coincides with an element-wise $\max$, examined in order embeddings~\citep{Vendrov2016}.
The retrieval results by both methods are appropriate in most cases and often overlap.
Concepts specified in the prompt are embedded with large norms in factors that capture their corresponding concept families, whereas unspecified concepts are represented with small norms.
Consequently, by retaining only the high-norm factors, we can compose concepts without corrupting the semantics of the original prompts.
These results suggest that \modelname\ expresses cross-family composition in a manner analogous to Boolean algebra and order embeddings.

\section*{The Use of Large Language Models.}
We used ChatGPT and GitHub Copilot as assistance tools for polishing the manuscript and implementing the experimental code.
We did not use large language models for research ideation or for proofs.
\end{document}